\theoremstyle{plain}
\newtheorem{theorem}{Theorem}[section]
\newtheorem{proposition}[theorem]{Proposition}
\newtheorem{corollary}[theorem]{Corollary}
\theoremstyle{definition}
\newtheorem{definition}[theorem]{Definition}
\theoremstyle{remark}
\def\eqref#1{equation~\ref{#1}}
\def\1{\bm{1}}
\DeclareMathAlphabet{\mathsfit}{\encodingdefault}{\sfdefault}{m}{sl}
\SetMathAlphabet{\mathsfit}{bold}{\encodingdefault}{\sfdefault}{bx}{n}
\title{A Quotient Homology Theory of Representation in Neural Networks}
\author{\name Kosio Beshkov \email kosio.neuro@gmail.com \\
  \addr Department of Physics, 
  University of Oslo, Norway}
\begin{document}

\maketitle

\begin{abstract}
Previous research has proven that the set of maps implemented by neural networks with a ReLU activation function is identical to the set of piecewise linear continuous maps. Furthermore, such networks induce a hyperplane arrangement splitting the input domain of the network into convex polyhedra $G_J$ over which a network $\Phi$ operates in an affine manner. 

In this work, we leverage these properties to define an equivalence relation $\sim_\Phi$ on top of an input dataset, {which defines a quotient space that} can be split into two sets related to the local rank of $\Phi_J$ and the intersections $\cap \text{Im}\Phi_{J_i}$. We refer to the latter as the \textit{overlap decomposition} $\mathcal{O}_\Phi$ and prove that if the intersections between each polyhedron and an input manifold are convex, the homology groups of neural representations are isomorphic to quotient homology groups $H_k(\Phi(\mathcal{M})) \simeq H_k(\mathcal{M}/\mathcal{O}_\Phi)$. This lets us intrinsically calculate the Betti numbers of neural representations without the choice of an external metric. We develop methods to numerically compute the overlap decomposition through linear programming and a union-find algorithm.

Using this framework, we perform several experiments on toy datasets showing that, compared to standard persistent homology, our overlap homology-based computation of Betti numbers tracks purely topological rather than geometric features. Finally, we study the evolution of the overlap decomposition during training on several classification problems and discuss some shortcomings of our method.
\end{abstract}

\section{Introduction}
\label{Introduction}

Deep learning is an incredibly powerful framework for learning and generalizing to highly complex tasks. Despite its widespread success, our theoretical understanding of this framework is still lacking. What we do know is that deep neural networks are universal approximators (\cite{hornik1991approximation}) and therefore have unbounded expressive power. If we further restrict ourselves to neural networks with ReLU activation functions, we also know that they are exactly equal to the set of continuous piecewise-linear (CPWL) functions (\cite{arora2016understanding}). 

A CPWL function applies a different affine function over different pieces of its domain. Naturally, this phenomenon is also observed in neural networks, where the structure of their weights splits the domain into different polyhedra called \textit{linear regions} (\cite{pascanu2013number, montufar2014number}). Quantifying the number (\cite{arora2016understanding, raghu2017expressive, serra2018bounding, hanin2019deep}) as well as other properties (\cite{fan2023deep}) of the linear regions that a network generates is thought to reflect the expressivity of a neural network and is a rich subfield in its own right.

Another avenue of research that has shown promise in many fields of science is Topological Data Analysis (TDA), which attempts to quantify the topological features of data (\cite{carlsson2009topology, wasserman2018topological}). In the context of deep learning, it has been used to relate the topology of training trajectories (\cite{birdal2021intrinsic, dupuis2023generalization}) and network weights (\cite{rieck2018neural, gabrielsson2019exposition, gutierrez2021persistent, andreeva2023metric}) to the generalization error of a neural network. In addition, while counting linear regions is one measure of expressivity, another way to quantify it is to look at the topology of the decision boundary of a neural network (\cite{guss2018characterizing, petri2020topological, grigsby2022transversality}) or the degree to which a network can change the topology of the input domain as it is propagated through its layers (\cite{naitzat2020topology, wheeler2021activation}).

TDA includes a plethora of methods, among which are persistence landscapes (\cite{bubenik2015statistical}), persistence images (\cite{adams2017persistence}), persistent laplacians (\cite{memoli2022persistent}) and the Mapper algorithm (\cite{singh2007topological}). These and most methods in TDA are based on the concept of persistence and more specifically - the persistent homology method (\cite{zomorodian2004computing}). Despite the power of this method, it is known that it is sensitive to outliers, undersampling, and highly non-linear mappings. Furthermore, it is based on calculating distances between points and therefore requires the choice of a metric. This choice leads to the identification of not only topological features, but also geometric ones like curvature (\cite{bubenik2020persistent}) and convexity (\cite{turkes2022effectiveness}). That might be a benefit in some cases, but when reasoning solely about topological features, we would like to avoid contamination from geometric sources.

These issues are especially relevant in the context of deep neural networks as they apply highly non-linear transformations of the input domain that increase curvature (\cite{poole2016exponential}) in the output space, making their analysis incredibly difficult. To avoid this, we study topological features in the input space through a purely topological lens. We do this using arguments from quotient and relative homology (\cite{hatcher2005algebraic}), which, except for a few works (\cite{pokorny2016topological, blaser2022relative, beshkov2024rank}), have received surprisingly little attention in the deep learning literature.

\subsection{Our contributions}
Instead of dealing with an external space induced by a map, in which the choice of a metric is ambiguous, we take the following approach. Start with a dataset, determine which pieces are glued to each other by the map, and then count the holes in the manifold after this gluing. Under some simple assumptions, these three steps can be rigorously identified with a manifold $\mathcal{M}$, an equivalence relation $\sim_\Phi$ and the quotient homology groups $H_k(\mathcal{M}/\sim_\Phi)$ ({or their Betti numbers defined as $\text{rank}[(H_k(\mathcal{M}/\sim_\Phi)]$} ) respectively  (see Appendix \ref{Prerequisites} for background on these terms). This procedure avoids mixing geometric information into our estimates of homology since it only depends on the quotient space $\sim_\Phi$. However, it requires knowledge or at least a good estimate of the topology of the manifold $\mathcal{M}$ and a reliable way to identify $\mathcal{M}/\sim_\Phi$.

In this work, we show how these steps can be carried out in the context of ReLU neural networks, further developing previous work relating polyhedral decompositions and homology (\cite{liu2023relu}). There are two difficulties with making such an approach rigorous. Firstly, how can we actually calculate the quotient space $\mathcal{M}/\sim_\Phi$? Secondly, given this quotient space, how and when can we calculate the quotient homology groups $H_k(\mathcal{M}/\sim_\Phi)$? Our fundamental realization is that since ReLU neural networks split the input domain into convex polyhedra, we can separate the sources of gluing (non-injectivity) into the local rank of the map at each polyhedron and the non-trivial intersections between the images of different polyhedra. We refer to these as the rank and the overlap source, respectively, and show that the latter can be computed exactly, using feasibility checks solved through linear programming. Afterwards, we prove under which conditions the overlap decomposition is sufficient to calculate quotient homology groups.

We compare the performance of persistent and quotient homology when it comes to determining the Betti numbers {} on toy datasets with a known topology. Our approach gives a new perspective on how topological information propagates through the layers of a neural network. Using quotient homology, we observe that topological simplification happens much more gradually than observed by previous persistent homology-based calculations (\cite{naitzat2020topology}). Finally, we study the properties of the overlap decomposition in randomly initialized and trained neural networks and show that the volume of overlapping regions decreases after training, whereas the number of overlap regions tends to increase.

\section{Decompositions of Neural Networks} 
\label{decompositions}

We begin by fixing our formalism. We will assume that a dataset $\mathcal{D} = \{x_1,x_2,...,x_K\}$ is sampled from some compact manifold $\mathcal{M}$, with or without boundary, embedded in $\mathbb{R}^{n_0}$. A neural network is a composite function $\Phi: \mathbb{R}^{n_0} \to \mathbb{R}^{n_1} \to ... \to \mathbb{R}^{n_L}$, where $n_l$ is the dimension of the $l$-th layer of the network. Between each pair of layers, we have \textit{preactivations} $T_l:\mathbb{R}^{n_{l-1}} \to \mathbb{R}^{n_l}$ given by the affine functions $T_l(z) = Wz+b$ and \textit{representations} given by $S_l(z) = \text{ReLU}(T_l(z))$. We will denote the output at layer $l$ by $\Phi^l:\mathbb{R}^{n_0} \to \mathbb{R}^{n_l}$ which is given by function composition $\Phi^l = S_l \circ S_{l-1} \circ ... \circ S_1 = \bigcirc_{k=1}^{l} S_k$. We will use the convention that the last layer does not apply an activation function so $\Phi^L = T_L \circ \bigcirc_{k=1}^{L-1} S_k$. Here we use the $\bigcirc$ symbol to denote function composition following the indexing order.

\subsection{ReLU networks and linear regions}
As shown in \citet{arora2016understanding}, every ReLU neural network is equivalent to a continuous piecewise-linear function and vice versa. Such a network decomposes input space into convex polyhedra (\cite{montufar2014number, balestriero2019geometry, zhang2020empirical, liu2023relu}). This happens as a result of the fact that the affine functions $T_l$ can be interpreted as specifying conditions $CT_l$, where $C$ is some diagonal matrix with entries in $\{-1,1\}$ defining a polyhedron $P = \{x | CT_l(x) \leq 0 \}$. Each subsequent layer further decomposes the input space into more polyhedra, since it adds more conditions to the previous decomposition. 

It is also known that within these convex polyhedra, the neural network applies an affine map. For any input $x_k$ in a dataset, we can look at the neurons that get activated by it and write down the \textit{codeword vector} $c_l(x) = \text{sign}(\Phi^l(x))$ at a layer $l$. Since this is defined at a single layer, we will call it a \textit{local layer codeword}. There are many inputs that activate the same neurons within a layer, so a single codeword corresponds to a whole region of input space. This leads to our first decomposition.

\begin{definition}
    For a codeword $J  \in \{0,1\}^{n_l}$ and a supporting set $R$ of input space under consideration (for example the dataset $\mathcal{D}$), there is an associated codeword set supported on $R$,
    \begin{equation}
        L^l_J|_R = \{x | c_l(x)=J, x \in R \subset \mathbb{R}^{n_0}\}.
    \end{equation}
    Then the \textit{local layer decomposition} supported on $R$ is defined by the set $\mathcal{L}^l|_R = \{L^l_J|_R| \forall J\}$.
\end{definition}

While the codewords within the same region of the local layer decomposition are constant, that does not mean that the network applies the same affine function over the region (see \ref{proof:loc_affine} in the Appendix). However, this  does hold in the polyhedral decomposition (\cite{liu2023relu}). Instead of looking at the codeword at layer $l$ we can stack all codeword vectors from the previous layers $C_l(x) = [c_1(x),...,c_l(x)]$ forming a new \textit{global codeword vector}. This leads us to the next decomposition,

\begin{definition}
    For a codeword $J \in \{0,1\}^{\sum\limits_{k=1}\limits^{l} n_k}$  and a supporting set $R$ of input space under consideration, there is an associated codeword set supported on $R$,
    \begin{equation}
        G^l_J|_R = \{x | C_l(x)=J, x \in R \subset \mathbb{R}^{n_0}\}.
    \end{equation}
    Then the \textit{polyhedral decomposition} supported on $R$ is defined by the set going over all possible $J$'s or $\mathcal{G}^l|_R = \{G^l_J|_R| \forall J\}$.
\end{definition}

As already mentioned, on each polyhedron {(from now on we will use the terms polyhedron and linear region interchangeably)} $G^l_J$ the neural network applies an affine map $\Phi^l|_{G^l_J}: G^l_J \to \mathbb{R}^{n_l}$, which can be written explicitly as,
\begin{equation}
    \label{eq: Phi rank}
    \Phi^l|_{G^l_J} (\cdot) = (\prod\limits_{k=1}\limits^{l} Q_{J_k}W_k) (\cdot)+\sum\limits_{i=1}\limits^{l} \prod\limits_{j=i+1}\limits^{l} (Q_{J_j}W_j) Q_{J_i}b_i,
\end{equation}

where $Q_{J_k} = \text{diag}(J_k)$ and $J_k$ is given by the codeword on the polyhedra. From now on we shall simplify our notation by writing this map as $\Phi^l_J$.

\subsection{The overlap decomposition}
The polyhedral decomposition is known in the literature. Here we define a new type of decomposition, which we will call the \textit{overlap decomposition}, that can be leveraged to compute the homology groups of a neural network. Our approach is intuitively justified by the realization that if neural networks are equivalent to continuous piecewise-linear functions, then the only way that the topology of the input space can change is through non-injective transformations. Since the network operates piecewise, we can have non-injectivity from two sources; see Appendix \ref{Theorem: two sources} for a proof:

\begin{enumerate}
    \item The map $\Phi^l_J$ is low-rank and projects the region $G^l_J$ to a lower-dimensional subspace. We call this the \textit{rank source}.
    \item For a set of maps $\{\Phi^l_{J_1},...,\Phi^l_{J_N}\}$ on different polyhedra, there is a non-trivial intersection $\bigcap\limits_{n} \text{Im}\Phi^l_{J_n} \neq \emptyset$. We call this the \textit{overlap source}.
\end{enumerate}

The first condition has been described in the work of \citet{beshkov2024rank}, but as we will see later, it can be ignored when the intersections between the data manifold and the regions $G^l_J$ are convex. Therefore, we will focus on defining the overlap decomposition based on the second condition, with the understanding that one should proceed with caution when this condition fails to hold. Given this, let us define the overlap decomposition.

\begin{definition}
    \label{Overlap definition}
    A neural network $\Phi^l$ induces an equivalence relation {$x \sim_{\Phi^l} x' $ when $\Phi^l(x)=\Phi^l(x')$}, that describes all regions of input space on which it is non-injective. It determines equivalence classes $[x] = \{x'|\Phi^l(x)=\Phi^l(x')\}$, which define the quotient,
    \begin{equation}
        \label{eq_class}
        \mathcal{M}/ \sim_{\Phi^l} = \{[x]| x\in \mathcal{M}\}.
    \end{equation}
     The \textit{overlap decomposition} {collects} the set of equivalence classes coming from the overlap source and is given by,
    \begin{equation}
    \label{overlap decomp equation}
         \mathcal{O}_{\Phi^l} = \Big\{[x] \in \mathcal{M}/\sim_{\Phi^l} \mid \exists I \subset \{1,...,\#\text{polyhedra}\},|I|\geq 2: \Phi^l(x) \in \bigcap\limits_{i \in I} \Phi^l(G^l_{J_i}) \Big \},
    \end{equation}
    {where the condition $|I|\geq2$ ensures we only consider maximal intersections across different polyhedra.}
\end{definition}

Note that the equivalence relation in equation \ref{eq_class} is reminiscent of a Reeb space (\cite{edelsbrunner2008reeb}), with the exception that we do not require that $x$ and $x'$ lie on the same connected component $\Phi^{-1}(\Phi(x)) = \Phi^{-1}(\Phi(x'))$. This definition is most useful for intuition and writing proofs. However, when finding overlaps algorithmically we use a coarser definition, as seen in equation \ref{coarse overlap decomp}, that produces identical results. This coarser decomposition can be fully computed given that we have knowledge of the polyhedral decomposition $\mathcal{G}^l$, by identifying the intersections between all pairs of different polyhedra and using a union-find structure on top (\cite{kleinberg2006algorithm}). 

In the next section, we discuss how to compute this decomposition for a neural network and a supporting dataset. We also discuss how it can be leveraged to compute the homology groups of a neural representation. In order to build intuition, we show how a network can solve the XOR problem by using either a rank strategy or an overlap strategy in Appendix \ref{xor example}.

\section{Relating the Overlap Decomposition and Homology} 
\label{Relative Homology}

\begin{figure}[ht]
    \centering
    \includegraphics[width=0.8\linewidth]{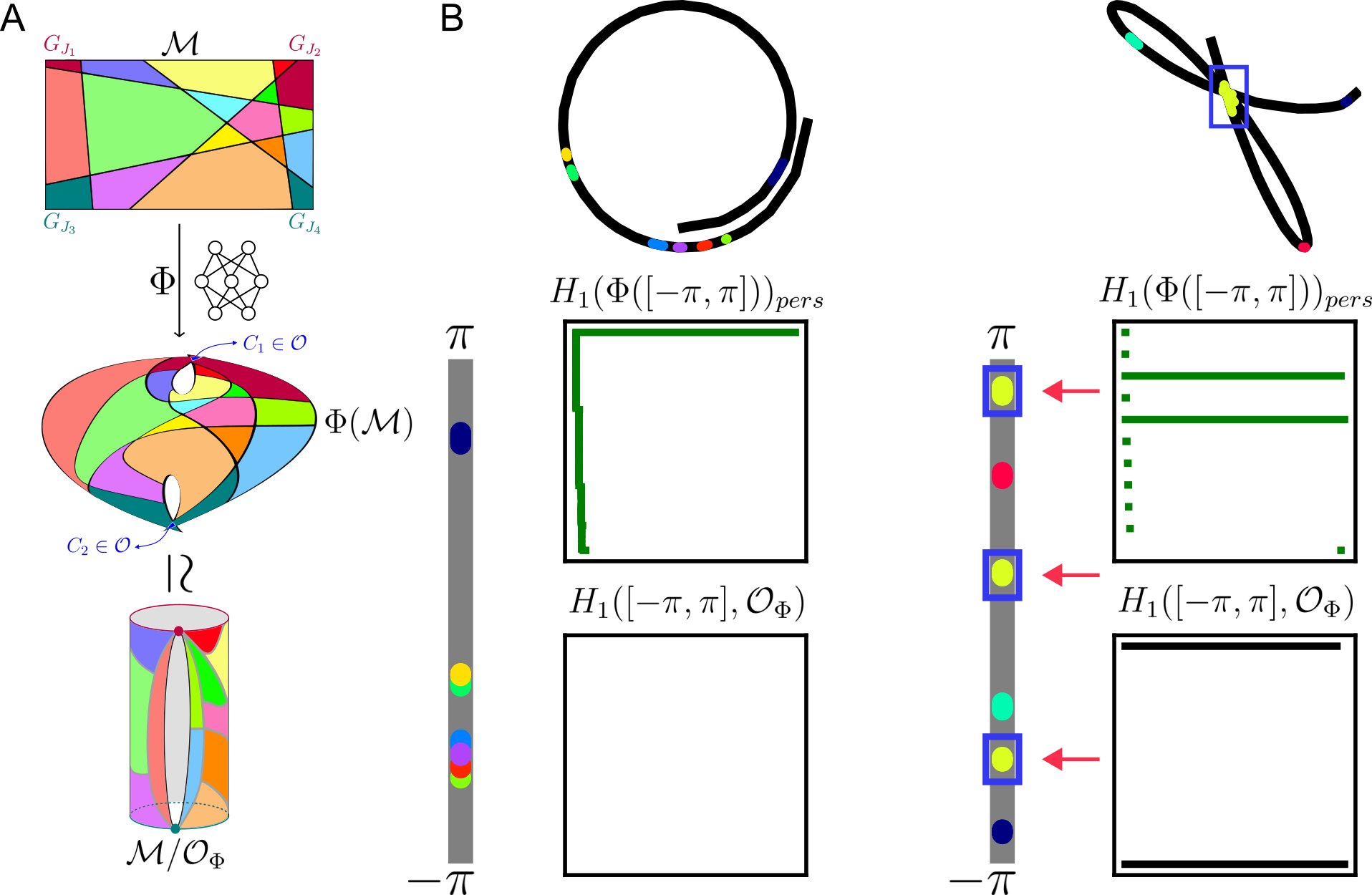}
    \caption{A) Illustration of the steps of our method. The manifold $\mathcal{M}$ is transformed by a network into a space on which the two corners are glued together. This space is homeomorphic to the punctured cylinder below it in which the corners $C_1$ and $C_2$ have been glued to a point. B) Visualization of our method, the top row shows two example non-linear curves, while the bottom shows their estimated homology groups through persistent homology barcodes (green) and quotient homology barcodes (black). Points that are identified through the overlap decomposition are given the same color and can be seen in the output space (top row) and the input space (the gray lines in the bottom row). As highlighted by the blue boxes in the second example, our method highlights the points that are identified by the neural network.}
    \label{fig:curves}
\end{figure}

\subsection{Numerical Determination of the Overlap Decomposition through Linear Programming}

We have stated that the overlap decomposition is a subset of the quotient space $\mathcal{M}/\sim_\Phi = \{[x] | x\in \mathcal{M}\}$. Here we show how to compute this set when working with finite data samples. As we prove later, under the assumption that the intersection between a polyhedron and the data manifold $P\cap \mathcal{M}$ is a convex set (this is automatically true if $\mathcal{M}$ is convex), we can use the polyhedral decomposition to exactly determine the overlaps between data samples. For two populated polyhedra, meaning there exist points in the dataset $\mathcal{D}$ that live in each of these polyhedra, the overlaps $\Phi_l(P_1) \cap \Phi_l(P_2)$ can be determined through linear programming. This is done by checking if given a point $p \in P_1$ there is another point $p' \in P_2$ such that $\Phi_l(p)=\Phi_l(p')$. This second point $p'$ can be found through linear programming under the constraints given by the H-representation of $P_1$ and $P_2$, which are given by $A_1x\leq b_1$ and $A_2x\leq b_2$ respectively. Thus, to check whether there is a point in $P_2$ that overlaps with $p \in P_1$, we get the following feasibility linear program,

\begin{align}
\begin{split}
    & \min\limits_{x}(0^Tx),\\
    & \text{Such that } A_2x \leq b_2, \\
    & \Phi^l_{J_2}(x) = \Phi^l_{J_1}(p).    
\end{split}
\end{align}

It is important to note that this approach looks for points within a polyhedron defined by the weights of a network, but this point does not need to be in the dataset $\mathcal{D}$. Therefore, this approach implicitly assumes that any point within $G^l_{J_2} \cap \mathbb{R}^{n_0}$ is a valid data point that \textit{could have existed} in the dataset. The pseudocode for this is shown in Algorithm \ref{alg:linprog}.


The advantage of this method is that it only identifies an overlap if there are two points on which the neural network generates the same output. In this sense, it is an exact determination of the overlap decomposition (up to errors due to machine precision) and avoids all issues that come with choosing an external metric. The drawback is that it requires the determination of the H-representation of many polyhedra and for high-dimensional input spaces it is prohibitively expensive to represent all of them. This could be improved through the use of other polyhedral representations that scale much better with dimension (\cite{kochdumper2019representation, sigl2023m}). Another way to achieve a speed up is to precompute distances and skip looking for overlaps between points that are $\delta$ away from each other. {In other words, we only solve a linear program if $d(P_1,P_2) = \min\limits_{x\in P_1, y\in P_2}d(x,y)<\delta$.} In this work we always choose $\delta=1$, except for section \ref{naitz} where we set $\delta=10$ for a slightly more conservative estimate. See Figure \ref{fig:time} in the Appendix to see how the algorithm scales with dimension and the effect of choosing different $\delta$'s {and Figure \ref{fig:false negative rate} to see an exploration of the false negative rate for different values of $\delta$}.

\subsection{Quotient Homology through the Overlap Decomposition}

Previously we stated that the rank and the overlap conditions form an exhaustive set of sources of topological change that a network can induce on the input manifold. This statement is formulated as a theorem  \ref{Theorem: two sources} and proven in the appendix. Here we state an intuitive version of our result.

\begin{theorem}
\label{Homology decomp theorem}
    (informal) The homology groups $H_k(\Phi(\mathcal{M}))$ are fully determined by the rank $\mathcal{R}_\Phi$ and the overlap $\mathcal{O}_\Phi$ decompositions.
\end{theorem}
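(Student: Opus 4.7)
The plan is to deduce the theorem from two standard facts: that $\Phi$ restricted to the compact manifold $\mathcal{M}$ is a quotient map onto its image, and that the only way an affine map can glue points together is through rank deficiency or coincidence of images across different affine pieces. First I would fix the polyhedral decomposition $\mathcal{G}^l$ restricted to $\mathcal{M}$, so that $\mathcal{M} = \bigcup_J (G_J \cap \mathcal{M})$ and $\Phi|_{G_J \cap \mathcal{M}} = \Phi_J|_{G_J \cap \mathcal{M}}$ is affine. Because $\mathcal{M}$ is compact and $\Phi$ is continuous, $\Phi(\mathcal{M})$ is compact Hausdorff, and the corestriction $\Phi : \mathcal{M} \to \Phi(\mathcal{M})$ is a closed continuous surjection; hence it is a quotient map, and the canonical bijection $\bar{\Phi} : \mathcal{M}/\!\sim_\Phi \,\to\, \Phi(\mathcal{M})$, where $x \sim_\Phi y \iff \Phi(x) = \Phi(y)$, is a homeomorphism. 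In particular $H_k(\Phi(\mathcal{M})) \simeq H_k(\mathcal{M}/\!\sim_\Phi)$, and it remains to show that $\sim_\Phi$ is generated by the equivalence relations encoded in $\mathcal{R}_\Phi$ and $\mathcal{O}_\Phi$.

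The core step is the structural lemma that already appears as Theorem~\ref{Theorem: two sources} in the appendix: if $\Phi(x) = \Phi(y)$ with $x \in G_{J_1} \cap \mathcal{M}$ and $y \in G_{J_2} \cap \mathcal{M}$, then exactly one of two cases occurs. Either $J_1 = J_2$, in which case the affine map $\Phi_{J_1}$ has a non-trivial kernel restricted to the affine hull of $G_{J_1} \cap \mathcal{M}$, and the identification $x \sim y$ is witnessed by the rank decomposition $\mathcal{R}_\Phi$; or $J_1 \neq J_2$, in which case $\Phi(x) = \Phi(y) \in \Phi(G_{J_1}) \cap \Phi(G_{J_2})$, and the identification lies in the overlap relation $\mathcal{O}_\Phi$ by definition. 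Taking the transitive closure of these two families of identifications reconstructs $\sim_\Phi$ in full, since any chain of equalities $\Phi(x_1) = \Phi(x_2) = \cdots$ is a concatenation of elementary identifications of the above two types.

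Combining the two parts yields the theorem. Writing $\sim_{\mathcal{R}} \cup \sim_{\mathcal{O}}$ for the union of equivalence relations generated by $\mathcal{R}_\Phi$ and $\mathcal{O}_\Phi$, the lemma gives $\sim_\Phi \,=\, \langle \sim_{\mathcal{R}} \cup \sim_{\mathcal{O}} \rangle$, so
\begin{equation*}
    H_k(\Phi(\mathcal{M})) \;\simeq\; H_k\bigl(\mathcal{M}/\!\sim_\Phi\bigr) \;\simeq\; H_k\bigl(\mathcal{M}/\langle \mathcal{R}_\Phi \cup \mathcal{O}_\Phi \rangle\bigr),
\end{equation*}
which is the content of the theorem. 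The later corollary $H_k(\Phi(\mathcal{M})) \simeq H_k(\mathcal{M}/\mathcal{O}_\Phi)$ would then follow by showing that when every $G_J \cap \mathcal{M}$ is convex, the rank source contributes only contractible identifications and drops out up to homotopy; that is a separate argument reserved for the convexity hypothesis.

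The main obstacle I anticipate is the boundary behaviour of the polyhedral decomposition: the closed polyhedra $G_J$ overlap on their lower-dimensional faces, so a single point of $\mathcal{M}$ may simultaneously lie in several $G_J$'s, and one has to check that the case split $J_1 = J_2$ vs.\ $J_1 \neq J_2$ in the structural lemma does not produce spurious overlap identifications at shared faces (where $\Phi_{J_1}$ and $\Phi_{J_2}$ agree by continuity, so the two codewords give the same image point trivially). Resolving this cleanly will likely require either restricting to relative interiors of polyhedra and taking closures, or formulating $\mathcal{O}_\Phi$ modulo such ``trivial'' boundary overlaps; the rest of the argument is then a routine application of compact-to-Hausdorff quotient theory and the fact that homeomorphic spaces have isomorphic singular homology.
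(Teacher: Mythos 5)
Your proposal is correct and follows essentially the same route as the paper: it establishes $H_k(\Phi(\mathcal{M})) \simeq H_k(\mathcal{M}/\!\sim_\Phi)$ via the compact-to-Hausdorff quotient argument (the paper constructs the induced bijection $\pi$ explicitly and invokes the same Munkres theorem), and then splits $\sim_\Phi$ by whether an equivalence class lies in one polyhedron (rank source) or several (overlap source), exactly as in the paper's two-sources theorem. Your closing remark about boundary faces is a legitimate subtlety the paper glosses over, but it does not change the substance of the argument.
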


So far we have only described how to calculate the overlap decomposition. We have yet to describe a method to determine the rank decomposition. One suggestion is that we can treat regions in $\mathcal{R}_\Phi$ as contractible as in \citet{beshkov2024rank}. As stated in that work, this is effective for regions of rank zero and one but can fail to produce an accurate result for regions of higher rank. Fortunately, it turns out that if the intersections $\mathcal{M} \cap G^l_J$ are convex, then homology is invariant to the presence of low-rank maps and the overlap decomposition is sufficient to compute quotient homology, see \ref{Convex homology proof} for a proof. This leads to the second fundamental theorem of this work.

\begin{theorem}
    \label{Convex homology theorem}
    Given a neural network $\Phi$ with a polyhedral decomposition $\mathcal{G}^l_J$ s.t. $\mathcal{M} \cap G^l_J$ is convex for any $G^l_J \in \mathcal{G}^l_J$, there is an isomorphism $H_k(\Phi(\mathcal{M})) \simeq H_k(\mathcal{M}/ \mathcal{O}_\Phi)$.
\end{theorem}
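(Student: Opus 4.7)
The plan is to construct the natural continuous surjection $\bar\Phi:\mathcal{M}/\mathcal{O}_\Phi\to\Phi(\mathcal{M})$ obtained by factoring $\Phi$ through the quotient, and then establish the desired homology isomorphism by showing $\bar\Phi$ has contractible fibers, via a Vietoris--Begle style argument. The role of the convexity hypothesis on each $\mathcal{M}\cap G^l_J$ is precisely to tame the rank source of non-injectivity that $\mathcal{O}_\Phi$ by itself ignores. Since any $x\sim_{\mathcal{O}_\Phi} y$ satisfies $\Phi(x)=\Phi(y)$, the universal property of the quotient yields a well-defined continuous surjection $\bar\Phi$; compactness of $\mathcal{M}$ makes it a proper closed map between compact Hausdorff spaces.

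Next I would carry out the fiber computation that is the heart of the argument. Fix $z\in\Phi(\mathcal{M})$ and let $I(z)=\{J : z\in\Phi(G^l_J)\}$ be the maximal index set of polyhedra whose image contains $z$. If $|I(z)|\geq 2$, Definition \ref{Overlap definition} places every pair of preimages of $z$ in the same $\mathcal{O}_\Phi$-class, so $\bar\Phi^{-1}(z)$ is a single point. If $|I(z)|=1$, say $I(z)=\{J\}$, then $\mathcal{O}_\Phi$ identifies no two points of $\Phi^{-1}(z)\cap\mathcal{M}$, and the fiber equals $(\Phi^l_J)^{-1}(z)\cap(\mathcal{M}\cap G^l_J)$; this is the intersection of the convex set $\mathcal{M}\cap G^l_J$ with the affine subspace $(\Phi^l_J)^{-1}(z)$, hence convex and non-empty. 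In either case the fiber is contractible, and this is the only step where the convexity hypothesis is used.

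With contractible fibers in hand, the conclusion follows from the Vietoris--Begle mapping theorem: a proper surjection with acyclic fibers between sufficiently regular spaces induces isomorphisms on \v{C}ech cohomology, and the polyhedral-affine structure endows both $\mathcal{M}/\mathcal{O}_\Phi$ and $\Phi(\mathcal{M})$ with the homotopy type of a finite CW complex, so \v{C}ech and singular theories coincide and we recover $H_k(\mathcal{M}/\mathcal{O}_\Phi)\simeq H_k(\Phi(\mathcal{M}))$. The main obstacle I anticipate is exactly this regularity verification: one has to certify that $\mathcal{M}/\mathcal{O}_\Phi$ is an ANR or a CW complex so that Vietoris--Begle passes cleanly from \v{C}ech cohomology to singular homology. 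A secondary subtlety is the treatment of points $z$ lying on the boundary between polyhedral images, where $I(z)$ jumps in cardinality; this is controlled by working with closed polyhedra and noting that boundary fibers also remain convex by the hypothesis.
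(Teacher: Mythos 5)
Your argument is correct in outline but takes a genuinely different route from the paper's proof (Theorem \ref{Convex homology proof}). The paper first shows $\Phi(\mathcal{M})$ is homeomorphic to $\mathcal{M}/\sim_\Phi$, then uses the decomposition $\sim_\Phi = \mathcal{R}_\Phi \cup \mathcal{O}_\Phi - \mathcal{R}_\Phi\cap_{pair}\mathcal{O}_\Phi$ from Theorem \ref{Theorem: two sources} and builds an explicit homotopy equivalence $\mathcal{M}/\sim_\Phi \simeq \mathcal{M}/\mathcal{O}_\Phi$ by deformation-retracting each residual rank-class along paths $\gamma_z$ inside the convex fibers. You instead collapse both steps into one: factor $\Phi$ through $\mathcal{M}/\mathcal{O}_\Phi$, show every fiber of $\bar\Phi$ is either a single point (when the image point lies in a multi-polyhedron intersection) or a convex set $(\Phi^l_J)^{-1}(z)\cap(\mathcal{M}\cap G^l_J)$ (when it does not), and invoke Vietoris--Begle. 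Your fiber dichotomy is exactly right and isolates the use of convexity in the same place the paper does --- the rank source produces convex, hence acyclic, fibers. What each approach buys: the paper's is more elementary and yields a homotopy equivalence rather than just a homology isomorphism, but its homotopy $F$ and the paths $\gamma_z$ are constructed somewhat informally; yours outsources the work to a standard theorem at the cost of the regularity obligations you honestly flag --- Hausdorffness of $\mathcal{M}/\mathcal{O}_\Phi$ (which follows because the relation is closed in $\mathcal{M}\times\mathcal{M}$: the multi-polyhedron intersections are finite unions of compact sets), the ANR property needed to pass from \v{C}ech cohomology to singular homology, and the cohomology-to-homology step (harmless over a field, which suffices for Betti numbers). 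Discharging those would make your version a complete, and arguably tighter, alternative proof. One small caution: your worry about fibers over boundary points is moot for acyclicity (a collapsed point is still acyclic), but you should define $I(z)$ using $\Phi(G^l_J\cap\mathcal{M})$ rather than $\Phi(G^l_J)$ so that the case $|I(z)|=1$ really confines the fiber to a single polyhedron.
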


An illustration of this theorem can be seen in panel A of Figure \ref{fig:curves}. While the assumption that the data lives on a convex set is very restrictive, it is important to emphasize that our assumption is much weaker. We only require that the intersections between the data manifold and the polyhedra induced by the structure of a network are convex. Given a large enough network in which the number of polyhedra grows polynomially in the number of neurons (on the order of $(T\#\text{neurons})^{n_0}/n_0!$, with $T$ a positive constant, as stated in \cite{hanin2019deep}) this condition will obtain, except for fractal data. This assumption also turned out to be empirically true in the simulations performed in this paper and the additional convexity analysis found in Figure \ref{fig:convexity test} of the Appendix.

\section{Results}
\subsection{Comparison between Quotient and Persistent Homology} 

Persistent homology is an essential tool for computing homology groups of data from different domains of science (\cite{carlsson2009topology, wasserman2018topological}), including machine learning (\cite{papamarkou2024position}). It is not a purely topological method as it also tracks geometric features of the underlying point cloud (\cite{bubenik2020persistent, turkes2022effectiveness}). Plus, it is highly sensitive to the sampling density and the chosen metric. While sometimes there are ways to overcome these problems, it is likely that geometric properties are distorting the study of representational topology.

To show the effectiveness of our method, we generated datasets of highly non-linear curves by the equation $f(\theta) = [\cos(a\theta)\cos(b\theta), \cos(a\theta)\sin(b\theta)]^T$. The parameters $a, b$ were randomly sampled from a uniform distribution on the interval $[-1,1]$. We sampled 500 equally spaced points $\theta \in [-\pi,\pi]$. Afterwards, we used an MSE loss to optimize a neural network with three layers, each with a width of 50 neurons, to predict these functions given the input points $(\theta,0)$. The parameters for all simulations are described in detail in Appendix \ref{Sim details}. Following training we compute the first homology groups $H_1$ of the output of the final network layer. We do this in two ways: standard persistent homology with the Euclidean metric and quotient homology using the metric as described in Appendix \ref{Relative Homology on Data}.

In Figure \ref{fig:curves} we show two examples of learned curves and their associated homology groups given by barcodes. {Barcodes encode topological features by intervals whose length represents their persistence across scales. The beginning of an interval on the $x$-axis corresponds to the scale at which a feature is born, whereas the end of the interval corresponds to the scale at which the feature dies. The barcodes for quotient homology are extracted from the persistence diagrams based on the quotient metric defined in \ref{Relative Homology on Data}.} One can see that in the first example, standard persistent homology identifies a circular feature as the ends of the interval are close in the output despite the fact that the network does not map them onto each other. This error is avoided when using quotient homology, as seen by the lack of bars in the barcode plot. In the second example, we see that both persistent homology and quotient homology generate two persistent features. However, given that we have access to the overlap decomposition, we are also able to track exactly which points were glued together by the network and were therefore responsible for the generation of the two circles. Many other examples are shown in Figure \ref{fig:supp figure many knots} of the Appendix.

\subsection{Revisiting manifold propagation through neural networks} 
\label{naitz}

\begin{figure*}[ht]
    \centering
    \includegraphics[width=0.9\linewidth]{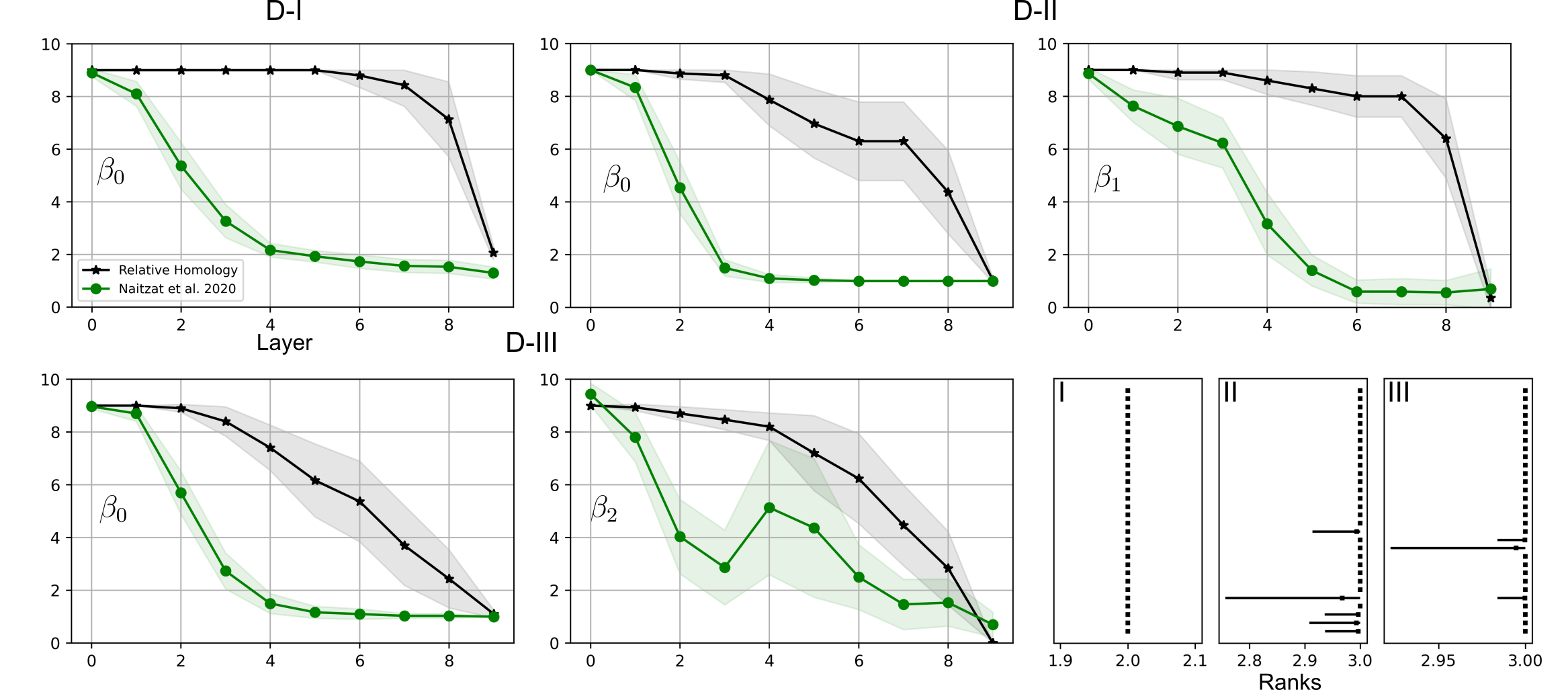}
    \caption{Reproduced Betti numbers from Naitzat et al. (green) and a quotient homology calculation (black), with shaded error-bars clipped to 9, which was the maximum observed value. In all three datasets we see a slower decay of all topological features when quotient homology is used. The three plots in the bottom right show distributions of the ranks generated by a neural network, indicating that the rank decomposition is unlikely to play a role in the estimated curves.}
    \label{fig:hom propagation}
\end{figure*}

Previous work by \citet{naitzat2020topology} has studied how the homology groups of different toy datasets evolve through the layers of an almost perfectly trained neural network. {Their datasets include 2 classes comprising \textbf{D-I}: 9 disks inside of one larger disk, \textbf{D-II}: 9 disjoint linked pairs of rings and \textbf{D-III}: 9 disjoint doubly concentric spherical shells. The corresponding Betti numbers of the one class we studied for these datasets are $\beta(\text{D-I}) = (9,0,0)$, $\beta(\text{D-II}) = (9,9,0)$ and $\beta(\text{D-III}) = (9,0,9)$ respectively.} In order to estimate the distances between points, they generate a $k$-nearest neighbor graph, with $k$ being optimized over the input data, and apply a shortest path algorithm on top of this graph. Following this, they calculate the homology groups at a particular scale for which they are stable and take that as an estimate of the homology groups of the neural representation at a layer.

They observe that the initial layers of a network rapidly reduce the Betti number of the input manifold in all three datasets. Given that persistent homology also provides geometric, rather than purely topological information (\cite{bubenik2020persistent}), we check whether our approach, which is only concerned with topological information, agrees with their results. Since working with the amount of samples used in the datasets generated by Naitzat et al. proved to be too computationally heavy for the computation of the polyhedral and overlap decompositions, we sampled 7800, 7500, and 8000 points for the three datasets, respectively and reran all parts of their analysis (see Appendix \ref{Sim details} for the full details of our implementation). Despite using smaller datasets, we managed to reproduce the quickly decaying Betti numbers observed in their work.

As can be seen in Figure \ref{fig:hom propagation}, quotient homology shows different behavior than the reproduced curves (one might also compare them to the original curves in their paper for a similar conclusion). Thus, our analysis shows that if we consider purely topological transformations, then it seems that while neural networks might initially twist the manifolds to look like they have lost topological features, actual changes in the topology of such manifolds happen much more gradually. This leads one to wonder whether topological changes in the sense of non-homeomorphism or rather strong geometric changes matter more for network performance. We focus more on this question in {section \ref{top vs geom} of} the discussion.

The datasets in \citet{naitzat2020topology} are not convex, and if low-rank regions exist we cannot apply Theorem \ref{Convex homology theorem} without checking for the convexity of each intersection, as then we would miss changes in topology induced by the rank source. To check if this was in fact an issue, we computed the rank of the maps over each polyhedron in the polyhedral decomposition for all models and datasets. The results can be seen in the bottom right plot of Figure \ref{fig:hom propagation} and show that regions of lower rank appear very rarely and are unlikely to have a severe impact on the curves that we estimate.

\subsection{Overlap decompositions in random and trained networks} 

\label{overlap trained random}
So far we have shown that by using a quotient homology approach we can study the topology of neural representation in neural networks by knowing the input space and the overlap decomposition. This has helped us separate the topological from the geometric features of a neural representation. Assuming that a network manages to learn a function almost perfectly, we would expect that the topology it generates is going to be similar to the topology that is induced by the function being learned. Classification problems inherently have the property of simplifying the topology of the initial data (a large part of the data is sent to a single point representative of a class) (\cite{papyan2020prevalence,rangamani2023feature}).

However, even before any learning occurs, we would like to have a good initialization of the network. Thus, it is important to understand the impact of training on the overlap decomposition. The size of the overlap decomposition at initialization should reflect the ability of a network to identify different parts of the input. Therefore, the number of overlap regions is a measure of expressivity reflecting the degree to which a network can implement non-injective functions. {We discuss this point further in section \ref{top express} of the Discussion.}

\begin{figure*}[ht]
    \centering
    \includegraphics[width=0.9\linewidth]{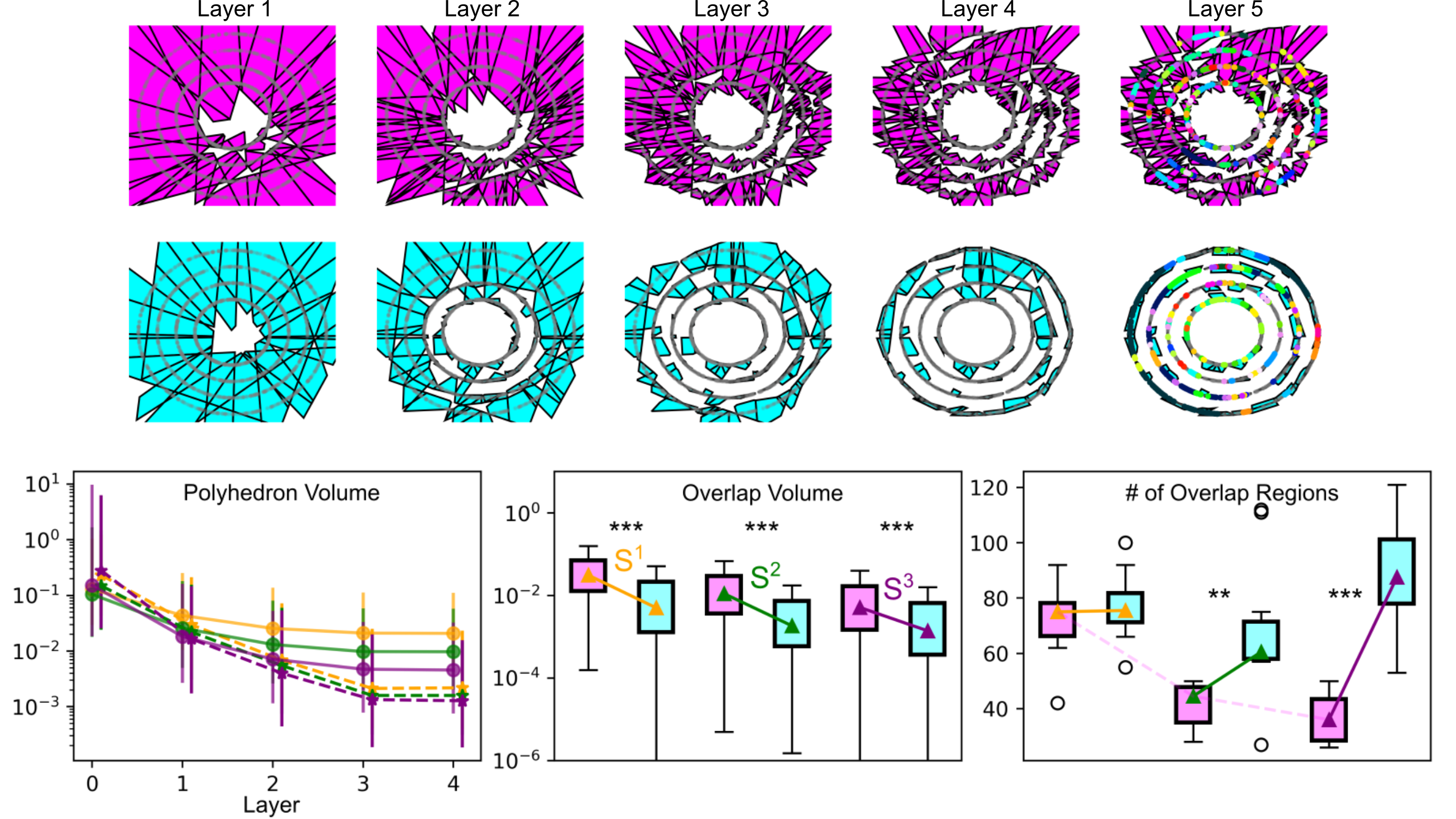}
    \caption{ \textbf{(Top row)} Visualization of the polyhedral decomposition {for populated regions} at initialization (magenta) and after training (cyan) across layers. Points of the same color belong to the same overlap class {and indicate that their regions overlap}, whereas gray points do not overlap. In the last layer we see the points in the overlap decomposition and note that some of them overlap with each other both before and after training. \textbf{(Bottom row)} (left) The dimension of the spheres is color coded with $S^1$-orange, $S^2$-green and $S^3$-purple. Polyhedron volume decreases across layers and dimensions before (circles) and after (stars) training. (center) Overlap volume decreases after training. Stars indicate Bonferroni corrected significance using a Kruskal-Wallis test (\cite{kruskal1952use}). (right) The number of overlap regions increases after training.}
    \label{fig:overlap regions}
\end{figure*}

To study this further, we create another toy problem in which we can compare overlap regions before and after training. We take four d-spheres with radii \{1, 1.5, 2, 2.5\} respectively and assign a class of 0 to the spheres with radius \{1, 2\} and a class of 1 to the other two. We do this for spheres of dimension \{1, 2, 3\} by sampling points i.i.d. from a normal distribution in $d+1$ and afterwards projecting them to the sphere of the desired radius through normalization. We sample 500 points from each such sphere and train 10 Kaiming-initialized (\cite{he2015delving}) networks for each dimension. Each network has an architecture of \{d+1, 25, 25, 25, 25, 2\} and is optimized for 1000 epochs with a learning rate of 2e-5.

Before and after training, we computed the polyhedral and overlap decompositions through linear programming. A plot of the regions generated by one network at each layer, before (magenta) and after (cyan) training, is shown in Figure \ref{fig:overlap regions}. As one can see, points fall in the overlap decomposition only in the last layer of the network. Another impression that this plot makes is that the polyhedra in the trained network seem to have a smaller volume. The plots confirm that this is the case both for those that participate in the overlap decomposition as well as those that do not.

Finally, since classification problems are likely more non-injective than an arbitrary function, one might expect that the number of overlap regions increases as a result of training. Looking at the final plot in Figure \ref{fig:overlap regions}, this does seem to be the case, although this trend is only significant for the datasets using two- and three-dimensional spheres. We achieve similar results when using an orthogonal initialization; see Figure \ref{fig:orthogonal} in the Appendix.

\section{Discussion}
While deep neural networks have proved to be extremely powerful and flexible, our theoretical understanding of their inner workings is still lacking. In this work we have related them to piecewise-linear functions over a set of convex polyhedra and described the areas of the input domain on which they act non-injectively. We further applied this description to define a quotient homology theory, which we then used to compute homology groups that are sensitive purely to topological rather than geometric features. We now discuss some situations in which our approach will fail.

\subsection{Homological type 1 and type 2 errors}

While in a perfect world we would like to identify all polyhedra that a network generates, due to combinatorial explosion, it is not feasible to do this for large networks. One way to do this would be to list all possible codewords that could specify a polyhedron in $\mathcal{G}^l$ and then find their H-representation. This means that we need to identify $2^{\text{\#neurons}}$ polyhedra for each layer. For this reason, we have focused on only considering polyhedra that are populated by at least one point in a dataset. This means that for a populated polyhedron $p \in P_i$ where the network implements a map $\Phi|_{P_i}$, there might be another unpopulated polyhedron $P'$ such that $\Phi|_{P_i}(p) = \Phi|_{P'}(z)$, where $z \in P'$. This would be an identification that the network induces on the input space that is missed due to the limited dataset and is reminiscent of a homological type 2 error.

Another issue arises from the fact that if we think of neural networks as functions between vector spaces $f:\mathbb{R}^{n_0} \to \mathbb{R}^{n_L}$, then the polyhedral decomposition they define extends to all of input space regardless of the actual manifold that a dataset comes from. Since we tend to embed input data in large vector spaces, there might be a point $z \notin \mathcal{M}$ within one of the populated polyhedra, such that $\Phi|_{P_i}(p) = \Phi|_{P_j}(z)$. This would imply that the network applies a gluing when it does not do that on the manifold $\mathcal{M}$ from which the data is sampled. This is reminiscent of a homological type 1 error.

 On one hand the scale of type 2 homological errors seems very severe as in most neural networks used in practice, there will be many more unpopulated regions than there are populated ones. On the other hand, we are only concerned with those regions in which the data manifold $\mathcal{M}$ lives and including regions that do not intersect it can lead to more homological type 1 errors coming from unpopulated regions. Therefore, restricting the analysis to populated regions is both a blessing and a curse and its extent depends on an intricate combination of the data and the model.
 
\subsection{Computational complexity of overlap detection}
We have proposed an algorithm for the discovery of the overlap decomposition. While an exact quantification of its computational complexity is not within reach, here we discuss the relation between its tractability and network or data structure. 

The first part of our algorithm relies on identifying the H-representation of each polyhedron. It is known that in a worst-case scenario the number of polyhedra $N$ in a neural network grows exponentially in depth $L$ and polynomially in width $w$ (\cite{goujon2024number}), which leads to a worst-case $O(w^{Ld})$ scaling. However according to \citep{hanin2019deep}, the expected number of polyhedra seems to grow polynomially in the number of neurons $M = \sum_{k=1}^{L}w_k$ with the order depending on the input dimension, this leads to a more favorable $O(M^d)$ scaling. Furthermore, since we only consider polyhedra that are populated by data, their number is further bounded by the number of samples in a dataset $S$. Then, as long as there are fewer data samples than polyhedra, the number of polyhedra scales linearly with data size $O(S)$. Together, these facts imply that the computation and storage of H-representations might be less of a problem than worst case analysis might imply. This is also supported by the simulations in Figure \ref{fig:time}. In the case that the number of polyhedra is still restrictive, further work leveraging the computational efficiency of alternative representations such as the Z and M representation (\cite{kochdumper2019representation, sigl2023m}) is needed.

The second part of our algorithm requires determining whether overlaps exist through linear programming. In principle, if $n$ is the average number of points in a polyhedron, one would need to solve on the order of $N(N-1)n^2$ linear programming tasks, which has $O(N^2n^2)$ scaling. Since points that are far away in representation space are unlikely to be glued, this can be significantly improved by only considering overlaps when their distance is below some threshold $\delta$, see Figure \ref{fig:time}. The computational complexity of interior point methods, such as the HiGHS solver used in our analysis, in linear programming is polynomial in the number of variables $v$ and constraints $c$, leading to $O(poly(v,c))$ scaling, as shown by Karmarkar's algorithm (\cite{karmarkar1984new}). Since both the number of overlaps and the complexity of linear programming scale polynomially (or even linearly if we only consider populated polyhedra), this second step also scales polynomially. With this in mind the final worst-case algorithmic complexity of our proposed algorithm is $O(N^2n^2poly(v,c))$. If we only consider populated polyhedra $\hat{N}$, we get $n=S/\hat{N}$ and the complexity is further simplified to $O(S^2poly(v,c))$.

Despite the fact that both steps have expected polynomial scaling, given the sheer size of modern neural networks, computing the overlap decomposition can still be prohibitively expensive. Here we have presented two approaches for improving this related to polyhedral representations and distance thresholding. Given the novelty of our approach, finding more efficient methods that allow us to probe more advanced neural architectures is an exciting avenue of future research.

\subsection{Topological versus geometric transformations}
\label{top vs geom}
We showed that previous calculations of Betti numbers in neural networks have likely described geometric rather than topological features and that topological changes occur much more gradually in neural representations across the layers of a network. This naturally raises the question of which type of transformation we should focus our efforts on understanding further. As argued in \citet{petri2020topological}, from the perspective of a linear classifier, the topology of the two classes is irrelevant as long as they are linearly separable in the final layer. This supports the view that when it comes to topology, only that of the decision boundary carries significance. 

It is important to note that, while the geometric view makes intuitive sense, it fails to capture the behavior of actual networks. {The concept of memorization in neural networks corresponds to a reduction in manifold dimension and radius as a function of depth \cite{stephenson2021geometry} indicating that networks in the memorizing regime might end up gluing many regions to a point. The limiting example of this is} a well-known phenomenon in deep networks trained with cross-entropy on classification tasks, known as neural collapse (\cite{papyan2020prevalence, rangamani2023feature}). In this occurrence, samples collapse to their class mean, which is an example of topological simplification that occurs despite it not being necessary from the perspective of a classifier. 

Furthermore, not all problems that we are interested in are classification problems. Certain tasks will require the implementation of topological changes (a trivial example is learning a constant function, which glues everything to a point). In such cases, it is important to be able to describe them without interference from geometric sources. We are hopeful that the tools introduced in this paper help us get closer to this possibility.

\subsection{Number of overlaps as a measurement of expressivity}
\label{top express}

Expressivity describes the complexity of the functions that a network architecture can implement and is often measured by counting linear regions. Here we have argued that a related measure is the number of regions in the overlap decomposition. This notion of expressivity describes the degree to which a network can implement non-injective maps {and the results shown in Figure \ref{fig:overlap regions} show that the number of overlaps can increase with training}. We have not thoroughly evaluated overlap expressivity in this work, but counting the number of overlap regions at initialization shows that compared to the number of polyhedra, they might scale in a non-strictly monotonic fashion as width and depth increase; see Figure \ref{fig:overlap expressivity} in the Appendix.

In principle, it is possible that certain network architectures might have many linear regions but fail to be able to implement highly non-injective maps. As we show in \ref{Single layer overlaps}, this is the case for networks with a single layer. We also establish a lower bound on the number of overlaps for networks with $(1,N,1)$ architecture in \ref{Overlap two layers}. The features that determine whether a network has the ability to generate such regions in more complex architectures are still unknown. Furthermore, since as we show in Theorem \ref{Theorem: two sources} and with an example \ref{xor example}, non-injectivity can be achieved through both the rank and the overlap decomposition. Understanding how having more linear regions in deeper networks corresponds to the capacity of implementing more non-injective functions through the rank or overlap decomposition is an open question that will hopefully lead to a better understanding of neural networks and their inner workings.

\section{Conclusion}
{We have presented a metric-free approach for studying representational topology in ReLU networks. Our formulation works with the maps implemented by a model rather than its output which is, to our knowledge, a novel perspective for studying neural representations. We have leveraged this advantage to prove that changing dataset topology can happen as a result of two sources related to the rank and to parts of the data which intersect in the output which we call overlaps. Furthermore, we have managed to prove that only the latter matters under the assumption that the polyhedra determined by a network overlap the data in a convex manner. For this setting we also provide an algorithm for computing the overlap decomposition and apply it to several datasets, showing that topological changes occur slower than previously predicted. }

{ This work establishes a novel theoretical framework for studying the topology of network representations and leaves many open questions that can be explored in future work. A specific future direction is to further study homological type 1 and type 2 errors and to improve on the computational complexity of our algorithm. A likely more interesting future direction is to explore when neural networks implement topological rather than geometric transformations. This would not be possible with previous approaches which conflate the two and the purely topological nature of our approach allows one to rigorously study this distinction. In addition, the definition of the overlap decomposition naturally leads to a novel notion of expressivity which can classify models according to their ability to generate non-injective maps. Further exploration of this novel notion of topological expressivity is another fascinating direction for future research that is made possible by our proposed approach.}

\section*{Code Availability}
All code is openly available and deposited at \url{https://github.com/KBeshkov/QuotientHomology}. The authors welcome any further questions regarding the reproducibility of this work.

\bibliography{main}

@article{huangfu2018parallelizing,
  title={Parallelizing the dual revised simplex method},
  author={Huangfu, Qi and Hall, JA Julian},
  journal={Mathematical Programming Computation},
  volume={10},
  number={1},
  pages={119--142},
  year={2018},
  publisher={Springer}
}

@inproceedings{wheeler2021activation,
  title={Activation landscapes as a topological summary of neural network performance},
  author={Wheeler, Matthew and Bouza, Jose and Bubenik, Peter},
  booktitle={2021 IEEE International Conference on Big Data (Big Data)},
  pages={3865--3870},
  year={2021},
  organization={IEEE}
}

@article{fan2023deep,
  title={Deep relu networks have surprisingly simple polytopes},
  author={Fan, Feng-Lei and Huang, Wei and Zhong, Xiangru and Ruan, Lecheng and Zeng, Tieyong and Xiong, Huan and Wang, Fei},
  journal={arXiv preprint arXiv:2305.09145},
  year={2023}
}

@article{turkes2022effectiveness,
  title={On the effectiveness of persistent homology},
  author={Turkes, Renata and Montufar, Guido F and Otter, Nina},
  journal={Advances in Neural Information Processing Systems},
  volume={35},
  pages={35432--35448},
  year={2022}
}

@inproceedings{rangamani2023feature,
  title={Feature learning in deep classifiers through intermediate neural collapse},
  author={Rangamani, Akshay and Lindegaard, Marius and Galanti, Tomer and Poggio, Tomaso A},
  booktitle={International Conference on Machine Learning},
  pages={28729--28745},
  year={2023},
  organization={PMLR}
}

@inproceedings{papamarkou2024position,
  title={Position: Topological Deep Learning is the New Frontier for Relational Learning},
  author={Papamarkou, Theodore and Birdal, Tolga and Bronstein, Michael M and Carlsson, Gunnar E and Curry, Justin and Gao, Yue and Hajij, Mustafa and Kwitt, Roland and Lio, Pietro and Di Lorenzo, Paolo and others},
  booktitle={Forty-first International Conference on Machine Learning},
  year={2024}
}

@article{naitzat2020topology,
  title={Topology of deep neural networks},
  author={Naitzat, Gregory and Zhitnikov, Andrey and Lim, Lek-Heng},
  journal={Journal of Machine Learning Research},
  volume={21},
  number={184},
  pages={1--40},
  year={2020}
}

@article{hanin2019deep,
  title={Deep relu networks have surprisingly few activation patterns},
  author={Hanin, Boris and Rolnick, David},
  journal={Advances in neural information processing systems},
  volume={32},
  year={2019}
}

@article{arora2016understanding,
  title={Understanding deep neural networks with rectified linear units},
  author={Arora, Raman and Basu, Amitabh and Mianjy, Poorya and Mukherjee, Anirbit},
  journal={International Conference on Learning Representations},
  year={2018}
}

@article{blaser2022relative,
  title={Relative persistent homology},
  author={Blaser, Nello and Brun, Morten},
  journal={Discrete \& Computational Geometry},
  volume={68},
  number={4},
  pages={949--963},
  year={2022},
  publisher={Springer}
}

@inproceedings{pokorny2016topological,
  title={Topological trajectory clustering with relative persistent homology},
  author={Pokorny, Florian T and Goldberg, Ken and Kragic, Danica},
  booktitle={2016 IEEE International Conference on Robotics and Automation (ICRA)},
  pages={16--23},
  year={2016},
  organization={IEEE}
}

@inproceedings{petri2020topological,
  title={On the topological expressive power of neural networks},
  author={Petri, Giovanni and Leit{\~a}o, Ant{\'o}nio},
  booktitle={TDA \& Beyond},
  year={2020}
}

@inproceedings{he2015delving,
  title={Delving deep into rectifiers: Surpassing human-level performance on imagenet classification},
  author={He, Kaiming and Zhang, Xiangyu and Ren, Shaoqing and Sun, Jian},
  booktitle={Proceedings of the IEEE international conference on computer vision},
  pages={1026--1034},
  year={2015}
}

@inproceedings{serra2018bounding,
  title={Bounding and counting linear regions of deep neural networks},
  author={Serra, Thiago and Tjandraatmadja, Christian and Ramalingam, Srikumar},
  booktitle={International conference on machine learning},
  pages={4558--4566},
  year={2018},
  organization={PMLR}
}

@article{masden2022algorithmic,
  title={Algorithmic determination of the combinatorial structure of the linear regions of relu neural networks},
  author={Masden, Marissa},
  journal={arXiv preprint arXiv:2207.07696},
  year={2022}
}

@article{grigsby2022transversality,
  title={On transversality of bent hyperplane arrangements and the topological expressiveness of ReLU neural networks},
  author={Grigsby, J Elisenda and Lindsey, Kathryn},
  journal={SIAM Journal on Applied Algebra and Geometry},
  volume={6},
  number={2},
  pages={216--242},
  year={2022},
  publisher={SIAM}
}

@inproceedings{liu2023relu,
  title={ReLU neural networks, polyhedral decompositions, and persistent homology},
  author={Liu, Yajing and Cole, Christina M and Peterson, Chris and Kirby, Michael},
  booktitle={Topological, Algebraic and Geometric Learning Workshops 2023},
  pages={455--468},
  year={2023},
  organization={PMLR}
}

@article{montufar2014number,
  title={On the number of linear regions of deep neural networks},
  author={Montufar, Guido F and Pascanu, Razvan and Cho, Kyunghyun and Bengio, Yoshua},
  journal={Advances in neural information processing systems},
  volume={27},
  year={2014}
}

@inproceedings{raghu2017expressive,
  title={On the expressive power of deep neural networks},
  author={Raghu, Maithra and Poole, Ben and Kleinberg, Jon and Ganguli, Surya and Sohl-Dickstein, Jascha},
  booktitle={international conference on machine learning},
  pages={2847--2854},
  year={2017},
  organization={PMLR}
}

@article{pascanu2013number,
  title={On the number of response regions of deep feed forward networks with piece-wise linear activations},
  author={Pascanu, Razvan and Montufar, Guido and Bengio, Yoshua},
  journal={arXiv preprint arXiv:1312.6098},
  year={2013}
}

@article{balestriero2019geometry,
  title={The geometry of deep networks: Power diagram subdivision},
  author={Balestriero, Randall and Cosentino, Romain and Aazhang, Behnaam and Baraniuk, Richard},
  journal={Advances in Neural Information Processing Systems},
  volume={32},
  year={2019}
}

@book{hatcher2005algebraic,
  title={Algebraic topology},
  author={Hatcher, Allen},
  year={2002},
  publisher={Cambridge University Press}
}

@book{vick2012homology,
  title={Homology theory: an introduction to algebraic topology},
  author={Vick, James W},
  volume={145},
  year={2012},
  publisher={Springer Science \& Business Media}
}

@article{guss2018characterizing,
  title={On characterizing the capacity of neural networks using algebraic topology},
  author={Guss, William H and Salakhutdinov, Ruslan},
  journal={arXiv preprint arXiv:1802.04443},
  year={2018}
}

@article{hornik1991approximation,
  title={Approximation capabilities of multilayer feedforward networks},
  author={Hornik, Kurt},
  journal={Neural networks},
  volume={4},
  number={2},
  pages={251--257},
  year={1991},
  publisher={Elsevier}
}

@article{rieck2018neural,
  title={Neural persistence: A complexity measure for deep neural networks using algebraic topology},
  author={Rieck, Bastian and Togninalli, Matteo and Bock, Christian and Moor, Michael and Horn, Max and Gumbsch, Thomas and Borgwardt, Karsten},
  journal={arXiv preprint arXiv:1812.09764},
  year={2018}
}

@inproceedings{andreeva2023metric,
  title={Metric space magnitude and generalisation in neural networks},
  author={Andreeva, Rayna and Limbeck, Katharina and Rieck, Bastian and Sarkar, Rik},
  booktitle={Topological, Algebraic and Geometric Learning Workshops 2023},
  pages={242--253},
  year={2023},
  organization={PMLR}
}

@article{birdal2021intrinsic,
  title={Intrinsic dimension, persistent homology and generalization in neural networks},
  author={Birdal, Tolga and Lou, Aaron and Guibas, Leonidas J and Simsekli, Umut},
  journal={Advances in Neural Information Processing Systems},
  volume={34},
  pages={6776--6789},
  year={2021}
}

@inproceedings{dupuis2023generalization,
  title={Generalization bounds using data-dependent fractal dimensions},
  author={Dupuis, Benjamin and Deligiannidis, George and Simsekli, Umut},
  booktitle={International Conference on Machine Learning},
  pages={8922--8968},
  year={2023},
  organization={PMLR}
}

@inproceedings{zomorodian2004computing,
  title={Computing persistent homology},
  author={Zomorodian, Afra and Carlsson, Gunnar},
  booktitle={Proceedings of the twentieth annual symposium on Computational geometry},
  pages={347--356},
  year={2004}
}

@article{bubenik2020persistent,
  title={Persistent homology detects curvature},
  author={Bubenik, Peter and Hull, Michael and Patel, Dhruv and Whittle, Benjamin},
  journal={Inverse Problems},
  volume={36},
  number={2},
  pages={025008},
  year={2020},
  publisher={IOP Publishing}
}

@article{poole2016exponential,
  title={Exponential expressivity in deep neural networks through transient chaos},
  author={Poole, Ben and Lahiri, Subhaneil and Raghu, Maithra and Sohl-Dickstein, Jascha and Ganguli, Surya},
  journal={Advances in neural information processing systems},
  volume={29},
  year={2016}
}

@article{beshkov2024rank,
  title={A rank decomposition for the topological classification of neural representations},
  author={Beshkov, Kosio and Einevoll, Gaute T},
  journal={arXiv preprint arXiv:2404.19710},
  year={2024}
}

@article{gutierrez2021persistent,
  title={Persistent homology captures the generalization of neural networks without a validation set},
  author={Guti{\'e}rrez-Fandi{\~n}o, Asier and P{\'e}rez-Fern{\'a}ndez, David and Armengol-Estap{\'e}, Jordi and Villegas, Marta},
  journal={arXiv preprint arXiv:2106.00012},
  year={2021}
}

@article{zhang2020empirical,
  title={Empirical studies on the properties of linear regions in deep neural networks},
  author={Zhang, Xiao and Wu, Dongrui},
  journal={arXiv preprint arXiv:2001.01072},
  year={2020}
}

@book{kleinberg2006algorithm,
  title={Algorithm Design},
  author={Kleinberg, J},
  year={2006},
  publisher={Addison Wesley},
  pages = {151-157}
}

@article{sigl2023m,
  title={M-Representation of Polytopes},
  author={Sigl, Sebastian and Althoff, Matthias},
  journal={arXiv preprint arXiv:2303.05173},
  year={2023}
}

@article{kochdumper2019representation,
  title={Representation of polytopes as polynomial zonotopes},
  author={Kochdumper, Niklas and Althoff, Matthias},
  journal={arXiv preprint arXiv:1910.07271},
  year={2019}
}

@book{munkres200,
    author = {Munkres, James},
    title = {Topology},
    publisher = {Pearson},
    edition= {second},
    year = {2000}
}

@article{carlsson2009topology,
  title={Topology and data},
  author={Carlsson, Gunnar},
  journal={Bulletin of the American Mathematical Society},
  volume={46},
  number={2},
  pages={255--308},
  year={2009}
}

@article{wasserman2018topological,
  title={Topological data analysis},
  author={Wasserman, Larry},
  journal={Annual Review of Statistics and Its Application},
  volume={5},
  number={1},
  pages={501--532},
  year={2018},
  publisher={Annual Reviews}
}

@article{papyan2020prevalence,
  title={Prevalence of neural collapse during the terminal phase of deep learning training},
  author={Papyan, Vardan and Han, XY and Donoho, David L},
  journal={Proceedings of the National Academy of Sciences},
  volume={117},
  number={40},
  pages={24652--24663},
  year={2020},
  publisher={National Acad Sciences}
}

@article{dijkstra1959note,
  title={A note on two problems in connexion with graphs},
  author={Dijkstra, Edsger W},
  journal={Numerische mathematik},
  volume={1},
  number={1},
  pages={269--271},
  year={1959},
  publisher={Springer}
}

@article{ctralie2018ripser,
  doi = {10.21105/joss.00925},
  url = {https://doi.org/10.21105/joss.00925},
  year  = {2018},
  month = {Sep},
  publisher = {The Open Journal},
  volume = {3},
  number = {29},
  pages = {925},
  author = {Christopher Tralie and Nathaniel Saul and Rann Bar-On},
  title = {{Ripser.py}: A Lean Persistent Homology Library for Python},
  journal = {The Journal of Open Source Software}
}

@article{Bauer2021Ripser,
    AUTHOR = {Bauer, Ulrich},
     TITLE = {Ripser: efficient computation of {V}ietoris-{R}ips persistence
              barcodes},
   JOURNAL = {J. Appl. Comput. Topol.},
  FJOURNAL = {Journal of Applied and Computational Topology},
    VOLUME = {5},
      YEAR = {2021},
    NUMBER = {3},
     PAGES = {391--423},
      ISSN = {2367-1726},
   MRCLASS = {55N31 (55-04)},
  MRNUMBER = {4298669},
       DOI = {10.1007/s41468-021-00071-5},
       URL = {https://doi.org/10.1007/s41468-021-00071-5},
}

@article{kruskal1952use,
  title={Use of ranks in one-criterion variance analysis},
  author={Kruskal, William H and Wallis, W Allen},
  journal={Journal of the American statistical Association},
  volume={47},
  number={260},
  pages={583--621},
  year={1952},
  publisher={Taylor \& Francis}
}

@book{munkres2018elements,
  title={Elements of algebraic topology},
  author={Munkres, James R},
  year={2018},
  publisher={CRC press}
}

@article{singh2007topological,
  title={Topological methods for the analysis of high dimensional data sets and 3d object recognition.},
  author={Singh, Gurjeet and M{\'e}moli, Facundo and Carlsson, Gunnar E and others},
  journal={PBG@ Eurographics},
  volume={2},
  pages={091--100},
  year={2007}
}

@article{memoli2022persistent,
  title={Persistent Laplacians: Properties, algorithms and implications},
  author={M{\'e}moli, Facundo and Wan, Zhengchao and Wang, Yusu},
  journal={SIAM Journal on Mathematics of Data Science},
  volume={4},
  number={2},
  pages={858--884},
  year={2022},
  publisher={SIAM}
}

@article{bubenik2015statistical,
  title={Statistical topological data analysis using persistence landscapes.},
  author={Bubenik, Peter and others},
  journal={J. Mach. Learn. Res.},
  volume={16},
  number={1},
  pages={77--102},
  year={2015}
}

@article{adams2017persistence,
  title={Persistence images: A stable vector representation of persistent homology},
  author={Adams, Henry and Emerson, Tegan and Kirby, Michael and Neville, Rachel and Peterson, Chris and Shipman, Patrick and Chepushtanova, Sofya and Hanson, Eric and Motta, Francis and Ziegelmeier, Lori},
  journal={Journal of Machine Learning Research},
  volume={18},
  number={8},
  pages={1--35},
  year={2017}
}

@inproceedings{gabrielsson2019exposition,
  title={Exposition and interpretation of the topology of neural networks},
  author={Gabrielsson, Rickard Br{\"u}el and Carlsson, Gunnar},
  booktitle={2019 18th ieee international conference on machine learning and applications (icmla)},
  pages={1069--1076},
  year={2019},
  organization={IEEE}
}

@inproceedings{edelsbrunner2008reeb,
  title={Reeb spaces of piecewise linear mappings},
  author={Edelsbrunner, Herbert and Harer, John and Patel, Amit K},
  booktitle={Proceedings of the twenty-fourth annual symposium on Computational geometry},
  pages={242--250},
  year={2008}
}

@book{spanier1966algebraic,
  title={Algebraic topology},
  author={Spanier, Edwin H},
  pages={344-345},
  year={1966},
  publisher={McGraw-Hill Book Company}
}

@article{goujon2024number,
  title={On the number of regions of piecewise linear neural networks},
  author={Goujon, Alexis and Etemadi, Arian and Unser, Michael},
  journal={Journal of Computational and Applied Mathematics},
  volume={441},
  pages={115667},
  year={2024},
  publisher={Elsevier}
}

@inproceedings{karmarkar1984new,
  title={A new polynomial-time algorithm for linear programming},
  author={Karmarkar, Narendra},
  booktitle={Proceedings of the sixteenth annual ACM symposium on Theory of computing},
  pages={302--311},
  year={1984}
}

@article{stephenson2021geometry,
  title={On the geometry of generalization and memorization in deep neural networks},
  author={Stephenson, Cory and Padhy, Suchismita and Ganesh, Abhinav and Hui, Yue and Tang, Hanlin and Chung, SueYeon},
  journal={arXiv preprint arXiv:2105.14602},
  year={2021}
}
\bibliographystyle{tmlr}
\newpage
\appendix
\onecolumn
\section{Prerequisites}
\label{Prerequisites}

Here we provide some prerequisite knowledge which will hopefully come in handy to readers unfamiliar with some of the concepts that we have used.

\subsection{Polyhedral geometry}
The atomic elements of this work are convex polyhedra. There are two popular ways to represent such objects. The first is called the H-representation and is given by the intersections of half-spaces. The second is called the V-representation and is given by the convex hull of a set of points. Here we will work purely with the H-representation, although previous work has also used V-representations (\cite{masden2022algorithmic}).

\begin{definition}
    A half-space is the set given by a linear equation,
    \begin{equation*}
        H_- = \{x | a^Tx \leq b\}.
    \end{equation*}
    In the H-representation, a polyhedron is described as the intersection of finitely many half-spaces. Then, a system of linear equations specified by a matrix $A$ and a vector $b$ form the H-representation of a polyhedron,
    \begin{equation*}
        P = \{x | Ax \leq b\}.
    \end{equation*}
\end{definition}

It is worth noting that if we replace the inequality in the half-space definition with a strict equality, we get hyperplanes defined by the equations $a_i^Tx=b$, where $a_i$ is the i-th row of the matrix $A$. Such hyperplanes are called the \textbf{supporting hyperplanes} of the polyhedron. Several hyperplanes also define a \textbf{hyperplane arrangement} which is in \textbf{general position} whenever $\dim (H_1\cap ... \cap H_n) = m-n$, where m is the dimension of the input space and $m \geq n$. In the case where $m < n$, the intersection should be empty for an arrangement to be in general position. In this work we will only concern ourselves with arrangements in general position, which are known to almost always occur in neural networks (\cite{grigsby2022transversality}).

One property of such polyhedra that we will use later is the fact that they are always convex. This will come in handy when we prove the conditions under which only the overlap decomposition is necessary when computing quotient homology. For all calculations using polyhedra we used the \textbf{polytope} package in python (https://github.com/tulip-control/polytope) and specified a bounding box of $[-100,100]$ around every input space.

\subsection{Linear Programming}
While the field of neural networks is still in its infancy, there are well established ways to study hyperplane arrangements. One such approach which we use to discover overlaps is linear programming. A linear program is a linear optimization technique that finds a solution $x$ under a set of linear constraints,

\begin{align*}
    & \max\limits_{x}(c^Tx),\\
    & \text{Such that } Ax \leq b.
\end{align*}

A special type of linear program which we mainly concern ourselves with in this work is the discovery of a feasible region. In this case we just want to find any point $x$ that satisfies the inequality constraints. This can be computed by substituting $c$ with a zero vector and thereby only looking for a point $x$ that satisfies the linear constraints. Linear programs can be solved through a variety of methods, in this work we have used the \textbf{scipy} wrapper of the HiGHS solver (\cite{huangfu2018parallelizing}).

\subsection{Simplicial complexes and filtrations}
Data typically comes in the form of a point cloud in which there is no apriori knowledge of the underlying structure of the space from which it is sampled. In order to study the topology of neural representations across the layers of a network, one has to associate some structure to these points. Given a metric space $(X,d)$ {where $d$ is some distance function}, a standard approach for this is the Vietoris-Rips complex, defined as the set $V_\epsilon(X)  = \{S : d(x_i,x_j) \leq \epsilon, \forall x_i,x_j \in S\}$. This forms an abstract simplicial complex, {which is a set} in which every subset $S$ with $k$ elements is a $(k-1)$-simplex {and every non-empty $Q\subseteq S$ is also part of the simplicial complex}. Given this structure we can compute simplicial homology, {see \citet{hatcher2005algebraic} for more information on its definition}.

Very often we do not have knowledge of the right value of $\epsilon$ and therefore use the strategy of persistent homology. In this case one {varies the $\epsilon$ parameter} and builds a filtration of simplicial complexes, known as the Vietoris-Rips filtration. Since we know that $V_\epsilon \subseteq V_{2\epsilon}$, this corresponds to adding new simplices to each previous simplicial complex.

\subsection{Quotient maps}

The main intuition behind our approach is that a network can change the topology of an input manifold by gluing different pieces of it together. This intuitive idea is studied in topology through the concept of a quotient space under an equivalence relation $\sim$. Quotient spaces are central in all following proofs and therefore deserve more attention. The first concept that needs clarification is that of an equivalence relation.

\begin{definition}
    A relation $\sim$ is an equivalence relation if and only if obeys the following properties.
    \begin{itemize}
        \item $x \sim x$ (reflexivity).
        \item $x \sim y \Leftrightarrow y\sim x$ (symmetry).
        \item If $x \sim y$ and $y \sim z$ then $x \sim z$ (transitivity).
    \end{itemize}
\end{definition}

If we are given a map $f:X \to Y$, it is often the case that $f$ produces the same output on several values of $X$ and therefore splits $X$ into several equivalence classes denoted by $[x] = \{z \in X | f(z)=f(x)\}$ (note that points on which $f$ is unique generate an equivalence class with a single point $[x] = \{x\}$). Together these equivalence classes can be written as a set $X/\sim_f = \{[x]| x\in X\}$ and there is \textit{canonical quotient map} $q:X \to X/\sim_f$ sending any $x$ to its equivalence class $[x]$. This set is then equipped with the \textit{quotient topology}, meaning that any subset $U \subseteq X/\sim_f$ is open if and only if $q^{-1}(U)$ is open in $X$.

\subsection{Relative homology}
While standard homology calculations are at this point quite familiar to machine learning researchers, the concept of relative homology is still rather unexplored in the machine learning literature. Intuitively, one of the main motivations behind relative homology is to understand how the homology groups of a space change as we glue a subspace from it. This is possible as long as $X$ and $S$ are compact Hausdorff and there is some neighborhood of $S$ which is a strong deformation retract to some closed neighborhood of $S$ (see \textbf{Theorem 2.14} and \textbf{Corollary 2.15} in \citet{vick2012homology}). {A space is compact if it can be covered by finitely many open sets or $X = \bigcup_{V\in F} V$, where $F$ is a finite set. A space is Hausdorff if any two points $x,y$ have neighborhoods $N(x),N(y)$ such that $N(x)\bigcap N(y) = \emptyset$. $S$ is a strong deformation retract of $N(S)$ if there is a map $G:N(S)\times [0,1] \to S$ such that $G(x,0)=x$, $G(x,1)\in S$ and $G(s,t) = s$ for $s\in S$}. Given these mild conditions, we can use relative homology to study the homology groups of quotient spaces $X/S$. 

The precise steps to calculate relative homology work as follows. If we have a simplicial complex of a space $V_\epsilon(X)$, we can associate a chain complex $C_k(X)$ containing all k-chains. Furthermore, we can associate a chain complex $C_k(S)$ to a subcomplex $S \subset X$. Finally, we also need to define a chain complex structure on the quotient $C_k(X,S) = C_k(X)/C_k(S)$. These objects fall inside of a short exact sequence {(which is characterized by the fact that the image of an arrow going into an element matches the kernel of an arrow going out of an element, for example $\text{Im}(i) = \ker (j)$ in the example below)},

\begin{equation*}
    \begin{tikzcd}
        0 \arrow[r, ""] & C_k(S) \arrow[r, "i"] & C_k(X) \arrow[r,"j"] & C_k(X,S) \arrow[r,""] & 0,
    \end{tikzcd}
\end{equation*}

where $i$ is the inclusion $S \xhookrightarrow{} X$, and $j$ is the quotient map $j: X \to X/S$.

What we would really like to have is a sequence between the quotients of the chain groups through some sort of boundary map like those used in standard homology calculations,

\begin{equation*}
    \begin{tikzcd}
        ... \arrow[r, ""] & C_k(X,S) \arrow[r, "\partial_k"] & C_{k-1}(X,S) \arrow[r,"\partial_{k-1}"] & C_{k-2}(X,S) \arrow[r,""] & ...
    \end{tikzcd}
\end{equation*}

Given such a map, we can define homology groups like usual,

\begin{equation*}
    H_k(X,S) = \ker\partial_{k-1} / \text{Im}{\partial_k}.
\end{equation*}

While we cannot get such a map directly, we can use the zig-zag lemma (\cite{munkres2018elements}) on the short exact sequence of chain groups to derive a long exact sequence of homology groups, which we can use to compute relative homology groups given knowledge of $H_k(S)$ and $H_k(X)$.

\begin{equation*}
    \begin{tikzcd}
        ... \arrow[r, ""] & H_k(S) \arrow[r, "i_*"] & H_k(X) \arrow[r,"j_*"] & H_k(X,S) \arrow[r,"\partial_*"] & H_{k-1}(S) \arrow[r,""] & ...
    \end{tikzcd}
\end{equation*}

The star subscript denotes that the maps are induced by the homology functor $H_k: f \to f_*$. This new boundary map connecting the homology groups $H_k(X,S)$ to $H_{k-1}(S)$ is called the \textit{connecting homeomorphism} (for details see \citet{hatcher2005algebraic} pages 115-119). 

Under the mild conditions stated above, relative homology is equivalent to the homology of the quotient space $X/S$ when $S$ is a subspace of $X$. However the overlap decomposition that we describe actually contains many and not just one such subspaces. Since all of these subspaces are disjoint, we can sequentially stack relative homology sequences to find out the final homology of the quotient space $\mathcal{M}/\hat{\mathcal{O}}$ (note that we use the coarser definition of the overlap decomposition \ref{coarse overlap decomp} to avoid taking an infinite amount of steps in the sequence). This leads to the sequence,

\begin{equation*}
    \begin{tikzcd}[row sep=small,column sep=small]
    ... \arrow[r, "\partial^*"] & H_{k}(S_1) \arrow[r, "\iota^*"] & H_k(\mathcal{M}) \arrow[r, "j^*"] & H_k(\mathcal{M},S_1) \arrow[r, "\partial^*"] \arrow[ld, "id"] & ...\\
    ... \arrow[r, "\partial^*"] & H_{i}(S_2) \arrow[r, "\iota^*"] & H_k(\mathcal{M},S_1) \arrow[r, "j^*"] & H_k(\mathcal{M}/S_1,S_2) \arrow[r, "\partial^*"] \arrow[ld, "id"] & ...\\
    ... \arrow[r, "\partial^*"] & H_{i}(S_3) \arrow[r, "\iota^*"] & H_k(\mathcal{M}/S_1,S_2) \arrow[r,  "j^*"] & H_k((\mathcal{M}/S_1)/S_2,S_3) \arrow[r, "\partial^*"] & ...
    \end{tikzcd}
\end{equation*}
 where $S_i \subset \mathcal{M}$ are ordered elements of $\hat{\mathcal{O}}$. There is a finite number of them since a network with finite neurons can generate a finite number of polyhedra and therefore there can only be a finite number of intersections between them. While we do not explicitly calculate relative homology groups in our experiments, this relation between relative and quotient homology can be useful in future work as it provides a way to apply exact sequences to the study of the topology of neural representations.
 
\subsubsection{Computing quotient and relative homology in data}
\label{Relative Homology on Data}
In the settings considered in this work, computing quotient homology can be thought of as sequentially computing relative homology groups. The fact the computation of relative homology requires knowledge of the homology groups $H(X)$ and $H(S)$, brings both advantages and disadvantages. On the bright side, it is sometimes the case that we might know something about the structure of the input data, whereas studying the output of a highly non-linear function like those implemented by neural networks is much more difficult. In this way, quotient homology avoids having to deal with the non-linear effects generated by neural networks. 
The downside of this approach is that if we do not know the homology groups of the input data (which is often the case), we need a way to estimate them. So how should we approach this?

If we have a reasonable guess for the spatial scale at which the input data can be analyzed, we can fix some $\epsilon$ and use the Vietoris Rips complex at that value to calculate homology groups at a single point. Then to calculate quotient homology we can append a single point to each simplex in $S$. This makes each simplex $\sigma \in S$ into a contractible cone and therefore homeomorphic to taking the quotient $X/S$. 

Alternatively, we can still use persistent homology but set the distances between every two points $s_i, s_j \in \sigma$ to 0. After this we can define the quotient (pseudo)metric with $p_1 \in [x]$ and $q_n \in [y]$ by,

\begin{equation*}
    d([x],[y]) = \text{inf}\{d(p_1,q_1) + d(p_2,q_2)+...+d(p_n,q_n)\}.
\end{equation*}

This effectively computes the shortest path between two points given that the distances within each equivalence class $[x]$ and $[y]$ are set to 0. Computationally, we implement this by first setting the aforementioned distances to 0 and afterwards, computing the shortest path on the distance weighted adjacency matrix using Djikstra's algorithm (\cite{dijkstra1959note}). While we developed an implementation of the first approach, it proved much more computationally efficient to use the second.

\newpage

\section{Proofs}

We start this section by showing that the only source of topological change that a network can implement is due to a failure to be injective. While such maps can fail to be surjective if we define $\Phi:\mathcal{M} \to \mathbb{R}^n$, since the codomain can fail to be covered, we are more interested in how the data manifold is transformed. Therefore, for the purposes of this goal we will define the codomain of the map to be $\text{Im}(\Phi)$, where $\text{Im}(\Phi) := \{\Phi(x)|x\in \mathcal{M}\} \subseteq \mathbb{R}^n$. This makes the mapping trivially surjective. Furthermore, we omit the layer index as our proofs apply to any stage of the network.

\begin{theorem}
    A ReLU neural network $\Phi: \mathcal{M} \to \text{Im}(\Phi)$, with $\mathcal{M}$ being a compact manifold, is not a homeomorphism iff $\Phi$ is not injective.
\end{theorem}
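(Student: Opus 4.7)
The statement is an iff, so I would split into the two implications. The easy direction is the backward one: any homeomorphism is by definition a bijection, so if $\Phi$ is not injective it cannot be a homeomorphism. This takes one line and requires no hypothesis on $\mathcal{M}$.

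The substantive direction is the contrapositive of the forward implication: assuming $\Phi$ is injective, I want to conclude that $\Phi$ is a homeomorphism onto $\text{Im}(\Phi)$. My plan is to invoke the standard topological fact that a continuous bijection from a compact space to a Hausdorff space is automatically a homeomorphism. To apply it, I need to verify three things: (i) $\Phi: \mathcal{M} \to \text{Im}(\Phi)$ is a bijection, (ii) $\Phi$ is continuous, and (iii) the domain is compact and the codomain Hausdorff. For (i), surjectivity is built in by the choice of codomain $\text{Im}(\Phi)$, and injectivity is the standing assumption. For (ii), a ReLU network is a composition of affine maps and the continuous function $\max(0,\cdot)$ applied componentwise, hence continuous. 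For (iii), $\mathcal{M}$ is compact by hypothesis, and $\text{Im}(\Phi) \subseteq \mathbb{R}^{n_L}$ inherits the Hausdorff property from Euclidean space.

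The only thing to justify carefully is the topological fact itself, which I would include in a sentence or two for completeness: given a closed set $C \subseteq \mathcal{M}$, the set $C$ is compact (closed subset of a compact space), so $\Phi(C)$ is compact (continuous image of compact), so $\Phi(C)$ is closed in the Hausdorff space $\text{Im}(\Phi)$. Hence $\Phi$ is a closed map, and since it is a bijection this means $(\Phi^{-1})^{-1}(C) = \Phi(C)$ is closed for every closed $C$, i.e.\ $\Phi^{-1}$ is continuous.

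I do not anticipate a real obstacle here; the statement is essentially a restatement of a classical fact together with the observation that ReLU networks are continuous maps between nice spaces. The only conceptual care needed is to make sure one is using $\text{Im}(\Phi)$ (not the ambient $\mathbb{R}^{n_L}$) as the codomain, which the theorem statement already arranges, so that surjectivity is automatic and one does not get distracted by points outside the image.
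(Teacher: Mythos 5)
Your proposal is correct and follows essentially the same route as the paper: the easy direction is bijectivity, and the substantive direction applies the classical fact that a continuous bijection from a compact space to a Hausdorff space is a homeomorphism (the paper cites Munkres, Theorem 26.6, where you prove the closed-map lemma inline). Your observation that $\text{Im}(\Phi)$ is Hausdorff simply as a subspace of $\mathbb{R}^{n_L}$ is in fact a cleaner justification than the paper's.
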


\begin{proof}
    It is useful to remind the reader of the properties necessary for a map to be a homeomorphism. 
    \begin{itemize}
        \item $\Phi$ is bijective,
        \item $\Phi$ is continuous,
        \item There is an inverse continuous function $\Phi^{-1}$.
    \end{itemize}

    From this we can see that since if $\Phi$ is not injective it is therefore not bijective, this is enough to prove one direction of the statement. Now we have to show that if $\Phi$ is not a homeomorphism then it is not injective. We will do that by showing that all other properties for a homeomorphism are guaranteed by the nature of the maps that neural networks can implement. For this we use \textbf{Theorem 2.1} in Arora et al. 2018., which states that every ReLU deep neural network represents a continuous piecewise linear function.

    Since such neural networks are continuous piecewise linear functions, the continuity of $\Phi$ follows. We have seen that surjectivity trivially follows from the way we define the codomain, so the only thing left to show is that $\Phi^{-1}$ is also continuous.

    It turns out that such a function need not be continuous, but that occurs solely when $\Phi$ is not injective. Let us assume that $\Phi$ is injective, then it is a continuous bijection (surjectivity applies everywhere). Since $\mathcal{M}$ is compact we can use \textbf{Theorem 26.6} from \citet{munkres200} to see that this implies that $\Phi^{-1}$ is continuous (the theorem in Munkres also requires that $\text{Im}(\Phi)$ is Hausdorff but this property follows from the fact that $\mathcal{M}$ is a compact manifold and $\Phi$ is continuous). This proves that injectivity implies that we have a homeomorphism and thus all topological changes occur as a result of the network failing to be injective.
\end{proof}

Next we show that the non-injectivity of neural networks can be described by two sources related to the local rank of a map and the overlap of regions in the polyhedral decomposition.

\begin{theorem}
    Denote the {quotient space generated by $\Phi$ under the relation $x\sim_\Phi x'$ when $\Phi(x)=\Phi(x')$ as $\mathcal{M}/\sim_\Phi = \{[x] | x\in\mathcal{M}\}$. After excluding singleton sets $S = \{[x]:|[x]|=1\}$. The set $\mathcal{F}=\mathcal{M}/\sim_\Phi-S$} is contained in the union of $\mathcal{R}_\Phi = \{[x] | \text{rank}(\Phi|_x) < \dim(\mathcal{M}), x \in G_J\}$ and $\mathcal{O}_\Phi$ (defined in \ref{Overlap definition}).
    \label{Theorem: two sources}
\end{theorem}

\begin{proof}
    Any equivalence class $[x]$ in $\mathcal{F}$ contains a set of points in $\mathcal{M}$. The input manifold has a polyhedral decomposition and can be written as $\mathcal{M} = \bigcup\limits_J G_J$, what this means is that the set of points contained in $[x]$ live in either one or several regions $G_J$. Thus, depending on how many polyhedra are occupied by the points in $[x]$, we can consider two cases.

    \begin{itemize}
        \item \textbf{Case 1:} The points in $[x]$ all fall in the same region $G_J$. We know that within the same region the network applies an affine transformation and whenever such a transformation is full rank, the map is locally a homeomorphism. Therefore, for a map to be non-injective within a single region, it has to fail to be full rank. This is the exact set described by the rank decomposition $\mathcal{R}_\Phi$.
        \item \textbf{Case 2:} The points in $[x]$ fall in several different regions $G_J$. In this case we know that since for any two points in $[x]$, coming from different regions $G_J$, we have $\Phi(x)=\Phi(x')$ then $\Phi(x') \in \bigcap\limits_{i \in I} \Phi(G_{J_i})$ for any $x' \in [x]$. This set exactly coincides with the overlap decomposition $\mathcal{O}_\Phi$.
    \end{itemize}
    Since these are the only two possibilities for where the points in any $[x]$ can come from, we have shown that $\mathcal{F} \subset \mathcal{R}_\Phi \cup \mathcal{O}_\Phi$. 
    The reason we do not have an equality is that it is possible that there are two equivalence classes $[x]$ and $[y]$ in $\mathcal{R}_\Phi$ that fall in a larger equivalence class $[x]_\mathcal{O} \in \mathcal{O}_\Phi$. In this case, $\mathcal{F}$ will contain $[x]_\mathcal{O}$, but not the other two classes. We can also describe such an equality if we define a pairwise intersection operation as,

    \begin{equation*}
        \mathcal{R}_\Phi\cap_{pair}\mathcal{O}_\Phi = \{A\cap B| A \in\mathcal{R}_\Phi, B\in \mathcal{O}_\Phi\}.
    \end{equation*}

    This expression is strictly a subset $\mathcal{R}_\Phi\cap_{pair}\mathcal{O}_\Phi \subset\mathcal{R}_\Phi$ and it filters out all equivalence classes in $\mathcal{R}_\Phi$ that are subsets of the classes in $\mathcal{O}_\Phi$. Then we can state $\mathcal{F} = \mathcal{R}_\Phi \cup \mathcal{O}_\Phi - \mathcal{R}_\Phi\cap_{pair}\mathcal{O}_\Phi$ and also $\mathcal{M}/\sim_\Phi = \mathcal{F}\cup S$.
\end{proof}

Next we show that if we treat $\Phi$ as a canonical quotient map and we have knowledge of the rank and overlap decompositions, then we can compute homology groups through quotient homology. This is a corrolary of the fact that the image of $\Phi$ is homeomorphic to the canonical quotient map induced by the equivalence relation $\sim_\Phi$.
\begin{theorem}
    $\Phi(\mathcal{M})$ is homeomorphic to $q(\mathcal{M})$. Where $q:\mathcal{M} \to \mathcal{M}/\sim_\Phi$ is the canonical quotient map.
\end{theorem}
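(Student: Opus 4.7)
The plan is to apply the universal property of the quotient topology together with the standard fact that a continuous bijection from a compact space to a Hausdorff space is automatically a homeomorphism. Since $\sim_\Phi$ was defined precisely so that $x \sim y \iff \Phi(x) = \Phi(y)$, the map $\Phi$ is constant on equivalence classes, which is exactly the hypothesis needed to descend $\Phi$ to the quotient.

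Concretely, I would first define the induced map $\widetilde{\Phi} : \mathcal{M}/\sim_\Phi \to \Phi(\mathcal{M})$ by $\widetilde{\Phi}([x]) = \Phi(x)$ and check that it is well-defined (independent of representative) and that the factorization $\Phi = \widetilde{\Phi} \circ q$ holds. The universal property of the quotient topology then gives continuity of $\widetilde{\Phi}$ for free, since $\widetilde{\Phi} \circ q = \Phi$ is continuous (as ReLU networks are continuous piecewise-linear maps, already invoked earlier in the excerpt). Next I would verify bijectivity: surjectivity is immediate because $\Phi$ is surjective onto its image $\Phi(\mathcal{M})$, and injectivity follows directly from the definition of $\sim_\Phi$, since $\widetilde{\Phi}([x]) = \widetilde{\Phi}([y])$ means $\Phi(x) = \Phi(y)$, i.e.\ $[x] = [y]$.

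It then remains to upgrade the continuous bijection $\widetilde{\Phi}$ to a homeomorphism. For this I would invoke the same compact-to-Hausdorff argument that the paper already uses (Theorem 26.6 of Munkres): the quotient $\mathcal{M}/\sim_\Phi$ is compact because it is the image of the compact manifold $\mathcal{M}$ under the continuous quotient map $q$; and $\Phi(\mathcal{M})$ is Hausdorff because it is a subspace of $\mathbb{R}^{n_L}$. A continuous bijection from a compact space to a Hausdorff space is closed, hence a homeomorphism, which completes the argument.

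The one subtlety that deserves care, and which I expect to be the main obstacle, is confirming that $\mathcal{M}/\sim_\Phi$ is actually Hausdorff — this is not automatic for arbitrary quotients — but here Hausdorffness is not even needed for the target of $\widetilde{\Phi}$; we only need compactness of the domain and Hausdorffness of the codomain. (One could in fact deduce Hausdorffness of $\mathcal{M}/\sim_\Phi$ a posteriori from the homeomorphism.) Beyond that, everything is a direct application of standard point-set topology, so the proof should be short and essentially formulaic once the universal property is set up.
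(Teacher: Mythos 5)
Your proposal is correct and follows essentially the same route as the paper: both define the induced map on equivalence classes, verify bijectivity from the definition of $\sim_\Phi$, obtain continuity from the universal property of the quotient (the paper states this more tersely as continuity ``following from'' that of $\Phi$), and conclude via the compact-domain/Hausdorff-codomain criterion (Theorem 26.6 of Munkres). Your version is in fact slightly tidier, since you make the well-definedness check and the Hausdorffness of $\Phi(\mathcal{M}) \subseteq \mathbb{R}^{n_L}$ explicit, whereas the paper's piecewise definition of $\pi$ is an unnecessary case split.
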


\begin{proof}
    We essentially have the following diagram: 
    \begin{equation*}
        \begin{tikzcd}
        \mathcal{M} \arrow[r, "\Phi"] \arrow[d, "q"']
        & \text{Im}\Phi  \\
        \mathcal{M}/\sim_\Phi \arrow[ru, "\pi"]\\
    \end{tikzcd}
    \end{equation*}
    If we can find some $\pi$ that is a homeomorphism, then we can write $\Phi = \pi \circ q$ and the proof would be complete. We can construct such a map in the following way,

    \begin{equation*}
        \pi([x]) = 
        \begin{cases}
            \Phi(x) \text{ if } |[x]|=1, \\
            \Phi(x') \text{ for some } x' \in [x].
        \end{cases}
    \end{equation*}
    This function is injective, since $\pi([x])=\pi([z])$ implies $\Phi(x)=\Phi(z)$ and therefore $[x]=[z]$. It is also surjective as for any $z \in \text{Im}\Phi$, there is an $x \in \mathcal{M}$ such that $\Phi(x) = z$ and $\pi([x]) = z$. The continuity of $\pi$ simply follows from the fact that $\Phi$ is continuous. Finally, the continuity of the inverse $\pi^{-1}$ again follows from \textbf{Theorem 26.6} in \citet{munkres200} as the quotient space of a compact space is compact,
    and the precise choice of $x'$ is irrelevant.
\end{proof}

\begin{corollary}
    The homology groups $H_k(\Phi(\mathcal{M}))$ are equivalent to the quotient homology groups $H_k(\mathcal{M}/ \sim_\Phi)$.
\end{corollary}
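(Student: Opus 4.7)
The plan is to derive this corollary directly from the homeomorphism $\pi : \mathcal{M}/\sim_\Phi \to \Phi(\mathcal{M})$ established in the preceding theorem, by invoking the functoriality of singular homology. Since $H_k$ is a topological invariant, any homeomorphism induces an isomorphism on all homology groups, so the stated isomorphism $H_k(\mathcal{M}/\sim_\Phi) \simeq H_k(\Phi(\mathcal{M}))$ will drop out with essentially no further work.

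Concretely, I would first invoke the preceding theorem to obtain the homeomorphism $\pi$ together with a continuous two-sided inverse $\pi^{-1}$. Then, applying the homology functor on each side yields group homomorphisms $\pi_* : H_k(\mathcal{M}/\sim_\Phi) \to H_k(\Phi(\mathcal{M}))$ and $(\pi^{-1})_* : H_k(\Phi(\mathcal{M})) \to H_k(\mathcal{M}/\sim_\Phi)$. Functoriality of $H_k$ then gives $\pi_* \circ (\pi^{-1})_* = (\pi \circ \pi^{-1})_* = (\mathrm{id})_* = \mathrm{id}$ and the symmetric composition, from which $\pi_*$ is an isomorphism of abelian groups. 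Writing this out as a two-line invocation of functoriality should suffice.

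The main obstacle is essentially nonexistent at this point: all of the nontrivial topological content — continuity of $\pi$, bijectivity, and continuity of $\pi^{-1}$ via the compactness of $\mathcal{M}$ and the Hausdorff property of $\mathrm{Im}\,\Phi$ — has already been handled in the preceding theorem. The only subtlety I would explicitly flag in the write-up is the interpretation of the word "equivalent" as isomorphism of abelian groups, and I would note in passing that the isomorphism holds uniformly for $H_k(\,\cdot\,;R)$ with any coefficient ring $R$, which matters in practice since Betti numbers are read off from the rational (or field) coefficient version on the quotient side.
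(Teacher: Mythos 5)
Your proposal is correct and follows exactly the paper's own argument: the paper's proof of this corollary is the one-line observation that homeomorphic spaces have identical homology groups, which is precisely the functoriality argument you spell out in more detail. No further comment is needed.
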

\begin{proof}
    This follows simply from the fact that homeomorphic spaces have identical homology groups.
\end{proof}

It turns out that in cases for which $\mathcal{M} \cap G_J$ is convex for any $J$, we can ignore the rank decomposition and find the quotient homology groups only given knowledge of the overlap decomposition which we have a way to compute. This is proven in the following theorem.

\begin{theorem}    
    \label{Convex homology proof}
    If $\mathcal{M}\cap G_J$ is convex $\forall G_J$, then $H_k(\Phi(\mathcal{M})) \simeq H_k(\mathcal{M}/\mathcal{O}_\Phi)$.
\end{theorem}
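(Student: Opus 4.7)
The plan is to factor $\Phi$ through its two quotient decompositions and reduce the theorem to a mapping statement whose fibers are controlled by convexity. By the preceding corollary we already have $H_k(\Phi(\mathcal{M})) \simeq H_k(\mathcal{M}/\sim_\Phi)$, so it suffices to prove $H_k(\mathcal{M}/\sim_\Phi) \simeq H_k(\mathcal{M}/\mathcal{O}_\Phi)$. Because $\mathcal{O}_\Phi \subseteq \sim_\Phi$ as equivalence relations on $\mathcal{M}$, the universal property of quotients supplies a canonical continuous surjection $q: \mathcal{M}/\mathcal{O}_\Phi \to \mathcal{M}/\sim_\Phi$, and the goal becomes showing that $q$ induces an isomorphism on homology in every degree.

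The key step is a fiber analysis of $q$ based on \Cref{Theorem: two sources}. Given a class $[x]_{\sim_\Phi}$, two cases arise. If $[x]_{\sim_\Phi}$ is an overlap class, it has already been collapsed to a single point of $\mathcal{M}/\mathcal{O}_\Phi$, so the fiber of $q$ over it is a point. Otherwise $[x]_{\sim_\Phi}$ is a non-overlap rank-source class, which by \Cref{Theorem: two sources} lies entirely in a single polyhedron $G_J$, and its preimage in $\mathcal{M}/\mathcal{O}_\Phi$ agrees with $\Phi^{-1}(\Phi(x)) \cap \mathcal{M} \cap G_J$. By Equation~\ref{eq: Phi rank} the restriction $\Phi|_{G_J}$ is affine, so this fiber is the intersection of an affine subspace with $\mathcal{M} \cap G_J$; the convexity hypothesis then makes it convex and in particular contractible. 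Since $\mathcal{M}$ is compact and $\sim_\Phi$ is closed in $\mathcal{M}\times\mathcal{M}$ (as the preimage of the diagonal under $\Phi\times\Phi$), both quotient spaces are compact Hausdorff. Applying the Vietoris--Begle mapping theorem to the proper surjection $q$ with acyclic fibers then yields $q_*: H_k(\mathcal{M}/\mathcal{O}_\Phi) \xrightarrow{\sim} H_k(\mathcal{M}/\sim_\Phi)$ for every $k$, which chained with the preceding corollary gives the theorem.

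The main obstacle is the topological regularity needed to apply Vietoris--Begle, in particular the closedness of $\mathcal{O}_\Phi$ in $\mathcal{M}\times\mathcal{M}$ (to ensure $\mathcal{M}/\mathcal{O}_\Phi$ is Hausdorff) and the closedness of $q$ itself. These should follow from the finiteness of the polyhedral decomposition and the closedness of each $\mathcal{M} \cap G_J$, but the argument is delicate because $\mathcal{O}_\Phi$ is defined through conditions ranging over arbitrary index subsets $I$ of polyhedra. A viable backup route, if the mapping-theorem approach proves cumbersome, is to enumerate the finite collection of distinct non-overlap rank-source fiber classes and iteratively collapse them using the standard result that quotienting by a contractible closed subspace is a homotopy equivalence. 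Either route depends crucially on the convexity of $\mathcal{M} \cap G_J$, which is precisely what turns the affine fibers of $\Phi|_{G_J}$ into contractible sets and thereby lets the rank source be ignored.
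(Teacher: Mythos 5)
Your main argument is correct and reaches the same conclusion by a genuinely different route. The paper also reduces to comparing $\mathcal{M}/\sim_\Phi$ with $\mathcal{M}/\mathcal{O}_\Phi$, but it does so by explicitly constructing a pair of maps $\sigma:\mathcal{M}/\mathcal{O}_\Phi\to\mathcal{M}/\sim_\Phi$ (your canonical surjection) and a section-like map $\pi$ in the other direction built from chosen representatives, and then exhibiting a fiberwise deformation retraction along paths inside the rank-source classes; convexity enters exactly where it does for you, namely in guaranteeing that each such class is a contractible set one can retract onto a representative. You instead keep only the forward surjection and invoke the Vietoris--Begle mapping theorem after checking that every fiber is either a point (overlap classes, already collapsed) or the intersection of an affine subspace with the convex set $\mathcal{M}\cap G_J$ (pure rank classes), hence acyclic. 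Your route buys cleanliness: it avoids the continuity questions raised by the paper's choice-of-representative maps $\pi$ and $F$, at the cost of a heavier theorem and the usual caveats (Vietoris--Begle is native to \v{C}ech (co)homology, so you need the two quotients to be nice enough --- here finite unions of images of convex polyhedral pieces, so this is harmless --- and you only obtain a homology isomorphism rather than the homotopy equivalence the paper claims). The paper's route is more elementary in its ingredients but, as written, glosses over exactly the regularity issues you flag.

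One concrete caveat on your backup route: the collection of non-overlap rank-source classes is \emph{not} finite. A single low-rank polyhedron contributes a continuum of fibers $\Phi_J^{-1}(z)\cap\mathcal{M}\cap G_J$, one for each $z$ in its image, so you cannot literally enumerate and sequentially collapse them; you would instead have to collapse each low-rank region onto a section of $\Phi_J$ in one step (or pass to a coarser decomposition, as the paper does for $\hat{\mathcal{O}}_\Phi$). The Vietoris--Begle argument does not suffer from this, so keep that as your primary proof.
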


\begin{proof}
    
    In the corollary of the previous theorem we have already shown that $H_k(\Phi(\mathcal{M})) \simeq H_k(\mathcal{M}/\sim_\Phi)$. We also know that $\mathcal{M}/\sim_{\Phi}= \mathcal{R}_\Phi \cup \mathcal{O}_\Phi\cup S - \mathcal{R}_\Phi\cap_{pair}\mathcal{O}_\Phi$, {where $S$ contains singleton sets that do not impact the topology}, therefore our proof will be complete if we can show that quotiening out the part of the rank decomposition that does not intersect the overlap decomposition leaves the homology groups invariant. We will abuse notation by writing $X-A$ instead of $X-\bigcup A$ to denote the points in $X$ that do not fall in any of the sets in $A$.

    Since we know that homology groups are invariant under homotopy equivalence, we require that $\mathcal{M}/\sim_\Phi$ is homotopy equivalent to $\mathcal{M}/\mathcal{O}_\Phi$. This is represented by the diagram:
    \begin{equation*}
        \begin{tikzcd}
        \mathcal{M} \arrow[r, "p"] \arrow[d, "q"']
        & \mathcal{M}/\mathcal{O}_\Phi \arrow[ld,shift left = 1, "\sigma"] \\
        \mathcal{M}/\sim_\Phi \arrow[ru, shift left = 1, "\pi"] \\
    \end{tikzcd}
    \end{equation*}
    The two arrows $p$ and $q$ coming out from $\mathcal{M}$ are canonical quotient maps. To show homotopy equivalence we now prove that $\pi \circ \sigma$ and $\sigma \circ \pi$ are homotopic to $\text{id}_{\mathcal{M}/\mathcal{O}_\Phi}$ and $\text{id}_{\mathcal{M}/\sim_\Phi}$ respectively. The two quotient maps split $\mathcal{M}$ into equivalence classes that we denote as $[x]_p$ and $[x]_q$. Since both $p$ and $q$ take a quotient with respect to $\mathcal{O}_\Phi$ the equivalence classes $[x]_q$ and $[x]_p$ will match on $x \in \mathcal{M} - (\mathcal{R}_\Phi-\mathcal{R}_\Phi\cap_{pair}\mathcal{O}_\Phi)$. Therefore on this set $\pi$ and $\sigma$ map identical equivalence classes to each other. The interesting part of the proof is dealing with the remaining set that is generated purely by the rank decomposition.

    We can define the map $\sigma: \mathcal{M}/\mathcal{O}_\Phi \to \mathcal{M}/\sim_\Phi$ as another quotient canonical map such that $q = \sigma \circ p$, that collapses the points that were missed by $p$ or,
    
    \begin{equation*}
        \sigma([x]_p) = 
        \begin{cases}
            q(x) \text{ if } x \in \mathcal{M} - (\mathcal{R}_\Phi-\mathcal{R}_\Phi\cap_{pair}\mathcal{O}_\Phi), \\
            q \circ p^{-1}([x]_p) \text{ if } x \in \mathcal{R}_\Phi-\mathcal{R}_\Phi\cap_{pair}\mathcal{O}_\Phi.
        \end{cases}
    \end{equation*}
    
    Note that since $p$ is injective on points that are not in $\mathcal{O}_\Phi$, it is invertible on this subdomain. The map $\pi: \mathcal{M}/\sim_\Phi \to \mathcal{M}/\mathcal{O}_\Phi$ is tricker, but we define it piecewise as,

    \begin{equation*}
    \pi([x]_q) = 
        \begin{cases}
            p(x) \text{ if } x \in \mathcal{M} - (\mathcal{R}_\Phi-\mathcal{R}_\Phi\cap_{pair}\mathcal{O}_\Phi), \\
            [z]_p \text{ such that } \sigma([z]_p) = [x]_q \text{ for some } z \in \mathcal{R}_\Phi - \mathcal{R}_\Phi\cap_{pair}\mathcal{O}_\Phi.
        \end{cases}
    \end{equation*}

    For the second condition we need to choose some representative $[z]_p$, since there are many $\sigma([z]_p) = [x]_q$, this particular choice does not really matter as in the end $\sigma \circ \pi ([x]_q) = \sigma([z]_p) = [x]_q$ and $\pi \circ \sigma([x]_p) = \pi([x]_q) =  [z]_p$. The first composition of maps is clearly homotopic to the identity, for the second we construct the homotopy.

    For the points in $\mathcal{R}_\Phi- \mathcal{R}_\Phi\cap_{pair}\mathcal{O}_\Phi$ representative of an equivalence class $[x]_q$ we can define a strong deformation retraction as $F:\mathcal{M}/\mathcal{O}_\Phi\times [0,1] \to \mathcal{M}/\mathcal{O}_\Phi$, where we have $F(x,0) = \text{id}_{\mathcal{M}/\mathcal{O}_\Phi}([x]_p)$ and $F(x,1) = \pi \circ \sigma ([x]_p)$. 
    \begin{equation*}
        F([x]_p,t) = 
        \begin{cases}
            [x]_p \text{ if } x\notin \mathcal{R}_\Phi- \mathcal{R}_\Phi\cap_{pair}\mathcal{O}_\Phi),\\
            \gamma_z(t)([x]_p) \text{ if } x \in \mathcal{R}_\Phi- \mathcal{R}_\Phi\cap_{pair}\mathcal{O}_\Phi).
        \end{cases}
    \end{equation*}

    Where $\gamma_z(t)$ is a continuous (in both $t$ and $z$) path from any point $[x]_p \in p\circ q^{-1}([x]_q)$ to the representative $[z]_p$ point. This map starts at $\gamma_z(0)([x]_p) = [x]_p$ and ends at $\gamma_z(1)([x]_p) = [z]_p$ while following a continuous path. Such a path is guaranteed to exist due to the fact that equivalence classes in $\mathcal{R}_\Phi$ arise as a result of a low-rank projection of a convex set, which implies they are convex (and therefore contractible) in $\mathcal{M}/\mathcal{O}_\Phi$ themselves. This proves that there is a homotopy equivalence between $\mathcal{M}/\sim_\Phi$ and $\mathcal{M}/\mathcal{O}_\Phi$. Since homology groups are invariant under homotopy equivalence we get our final result,
    \begin{equation*}
        H_k(\mathcal{M}/\sim_\Phi) \simeq H_k(\mathcal{M}/\mathcal{O}_\Phi).
    \end{equation*}
\end{proof}

It is important to state that while we have used convexity as the required property for our proof, a much more general and abstract but also less restrictive property exists. Namely it is necessary that the fibers $\sigma^{-1}([x]_q)$ are contractible. Since $\sigma$ is a surjection and the two quotients are compact metric spaces, the statement $H_k(\mathcal{M}/\sim_\Phi) \simeq H_k(\mathcal{M}/\mathcal{O}_\Phi)$ follows from an application of the Vietoris-Begle theorem (\cite{spanier1966algebraic}). This fiber condition is always true when the intersections are convex since linear projections of convex sets are convex and their fibers are contractible, but it could also be true when the intersections are not convex. 

But how often is it true that $\mathcal{M} \cap G_J$ is convex for any $J$? It turns out that this is always true if $\mathcal{M}$ is convex. Otherwise the probability of this property obtaining likely increases with the size of a network. This is expected to happen since as the number of polyhedra increases, the volume of each polyhedron shrinks and intersects a smaller area of the manifold. This observation is also empirically supported by Figure \ref{fig:convexity test}. At this stage this is a heuristic argument and hopefully future research will clarify the extent to which it applies in practically useful neural networks.

Finally, while working directly with the overlap decomposition $\mathcal{O}_\Phi$ is useful for the proofs above, this is not the set that we compute with our algorithm. We actually work with a coarser set that is homotopy equivalent to $\mathcal{O}_{\Phi^l}$, {and is defined in terms of the equivalence relation $x \sim_C x'$. This relation holds when for all $I' \subset \{1,..., \# \text{polyhedra}\}$ it follows that $\Phi^l(x) \in \bigcap\limits_{i \in I'}\Phi^l(G^l_{J_i}) \iff \Phi^l(x') \in \bigcap\limits_{i \in I'}\Phi^l(G^l_{J_i})$. The coarser overlap decomposition is then defined} as,

\begin{equation}
\label{coarse overlap decomp}
    \mathcal{\hat{O}}_{\Phi^l} = \Big\{[x] \in \mathcal{M}/\sim_C \mid \exists I \subset \{1,..., \# \text{polyhedra}\}, |I|\geq 2: \Phi^l(x) \in \bigcap\limits_{i \in I} \Phi^l(G^l_{J_i})\Big \}.
\end{equation}

This is the same set with the exception that by changing the condition $\Phi(x)=\Phi(x')$, all points in the maximal intersections of the images of $\Phi$ over different polyhedra $G_{J_i}^l$ are now identified together. Since {the image of} each equivalence class $\Phi([x]) = \bigcap\limits_{i\in I}\Phi(G^l_{J_i})$ is a convex set (intersections of convex sets are convex), by the same argument, there is a homotopy equivalence between $\Phi(\mathcal{M}) \simeq \mathcal{M}/\mathcal{O}_\Phi$ and $\mathcal{M}/\hat{\mathcal{O}}_\Phi$ through contraction of each such intersection to some representative point $[z]$.

\begin{proposition}
\label{Single layer overlaps}
    $\mathcal{O}_\Phi$ is always trivial for single layer networks.
\end{proposition}

\begin{proof}
    This is simply a consequence of the fact that overlaps require at least two regions with the same output, but in one layer networks each region is defined by a unique codeword. Therefore, the points in each region are projected to different subspaces and do not overlap.
\end{proof}

\newpage

\section{Examples}
\begin{proposition}
\label{proof:loc_affine}
    Given a region $L^l_J$ of the local decomposition $\mathcal{L}^l$, there can be two points $x,y \in L^i_J$ for which the maps $\Phi^l|x \neq \Phi^l|y$.
\end{proposition}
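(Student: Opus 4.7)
The plan is to prove this by explicit counterexample, using the observation that a local layer region $L^l_J$ is determined only by the codeword $c_l$ at layer $l$, whereas by equation (\ref{eq: Phi rank}) the affine map that $\Phi^l$ implements at a point depends on the entire codeword history $C_l(x) = [c_1(x), \dots, c_l(x)]$, equivalently on which polyhedron of $\mathcal{G}^l$ the point lies in. It therefore suffices to exhibit a network and two points $x, y$ with $c_l(x) = c_l(y)$ but $c_{l'}(x) \neq c_{l'}(y)$ for some earlier layer $l' < l$.

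A minimal example works. Take $L = 2$, $n_0 = 1$, $n_1 = 2$, $n_2 = 1$, and choose weights so that $T_1(u) = (u, -u)^\top$ and $T_2(z) = z_1 + z_2$. For $u > 0$ the first-layer activation pattern is $(1, 0)$ and $\Phi^2(u) = u$; for $u < 0$ it is $(0, 1)$ and $\Phi^2(u) = -u$. Because $\Phi^2$ is non-negative on both sides of the origin, the layer-$2$ codeword $c_2(u) = \text{sign}(\Phi^2(u)) = 1$ agrees for every nonzero $u$, so any pair $x > 0$, $y < 0$ lies in the same region $L^2_{(1)}$, yet $\Phi^2|_x$ acts as the identity while $\Phi^2|_y$ acts as negation — two genuinely distinct affine maps.

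To complete the argument I would simply substitute $Q_{J_1} = \text{diag}(1, 0)$ and $Q_{J_1} = \text{diag}(0, 1)$ into equation (\ref{eq: Phi rank}) and verify that the resulting scalar multiplier changes sign when $J_1$ is swapped, confirming that the two affine expressions really are different. I do not anticipate any serious obstacle: the proposition is the formal shadow of the observation that $\mathcal{L}^l$ forgets intermediate codewords while $\mathcal{G}^l$ remembers them. The only mild subtlety is picking a small network in which the two candidate points genuinely share the layer-$l$ codeword rather than being separated at layer $l$ by a sign flip of the preceding hidden units — the absolute-value construction above is precisely designed to handle this.
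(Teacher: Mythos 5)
Your proposal is correct and takes essentially the same approach as the paper: both exhibit an absolute-value-type counterexample (the paper uses the tent function $1-\lvert x\rvert$ realized by a network with a single positive output neuron, you use $\lvert u\rvert = \mathrm{ReLU}(u)+\mathrm{ReLU}(-u)$) in which the layer-$l$ codeword is constant while the two underlying affine pieces differ. Your version is slightly more explicit in writing down the weights, but the idea is identical.
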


\begin{proof}
    We can prove this by a simple counterexample. Consider the interval $[-1,1]$, let's say that there is a piecewise-function,
    \begin{equation*}
        f(x) = 
        \begin{cases}
            x+1 \text{ if } x<0\\
            -x+1 \text{ if } x > 0.
        \end{cases}
    \end{equation*}
    Since this is a piecewise-linear function we know that there is an associated neural network $\Phi: \mathbb{R} \to \mathbb{R}$ with a single output neuron that implements it. We also know that the local decomposition is $\mathcal{L} = L_{\{1\}} = [-1,1]$. Clearly the two affine functions are not equal to each other.
\end{proof}

\subsection{Expressivity constructions}
\begin{proposition}
    \label{xor example}
    The XOR problem, by which we mean finding a map $f:\mathbb{R}^2\to \mathbb{R}$ such that $f(1,0)=f(0,1)>0$ (we can always rescale if we want this to be equal to 1) and $f(0,0)=f(1,1)=0$, can be made linearly separable by a network with either a trivial overlap or rank decomposition, but not both.
\end{proposition}

\begin{proof}
We will approach this constructively in order to get a better intuition for what is happening in the two types of solutions. 
\begin{enumerate}
    \item  Beginning with the pure rank solution, take a single layer ReLU neural network of width two, with weights specified by the matrix,
    \begin{equation*}
        W=\begin{bmatrix}
        -1 & 1 \\
        1 & -1 
    \end{bmatrix}
    \end{equation*}
    and bias $b=-[0.5,0.5]^T$. As one can see $W[0,0]^T+b=W[1,1]^T+b=b$ and since $b$ is negative after application of the ReLU both of these points go to the origin $(0,0)$. On the other hand, $W[1,0]^T+b=[-1,1]^T-[0.5,0.5]^T=[-1.5,0.5]^T$ and $W[0,1]^T+b = [0.5,-1.5]^T$. After application of the ReLU $(1,0)$ will be on the y axis and $(0,1)$ will be on the x axis (this is in the space of the neurons in the second layer, see Figure \ref{fig:XOR rank} for an illustration). At this point the two classes become linearly separable by the family of hyperplanes $H_c=\{x+y=c | 0<c<0.5\}$.

    \begin{figure}[ht]
        \centering
        \includegraphics[width=0.7\linewidth]{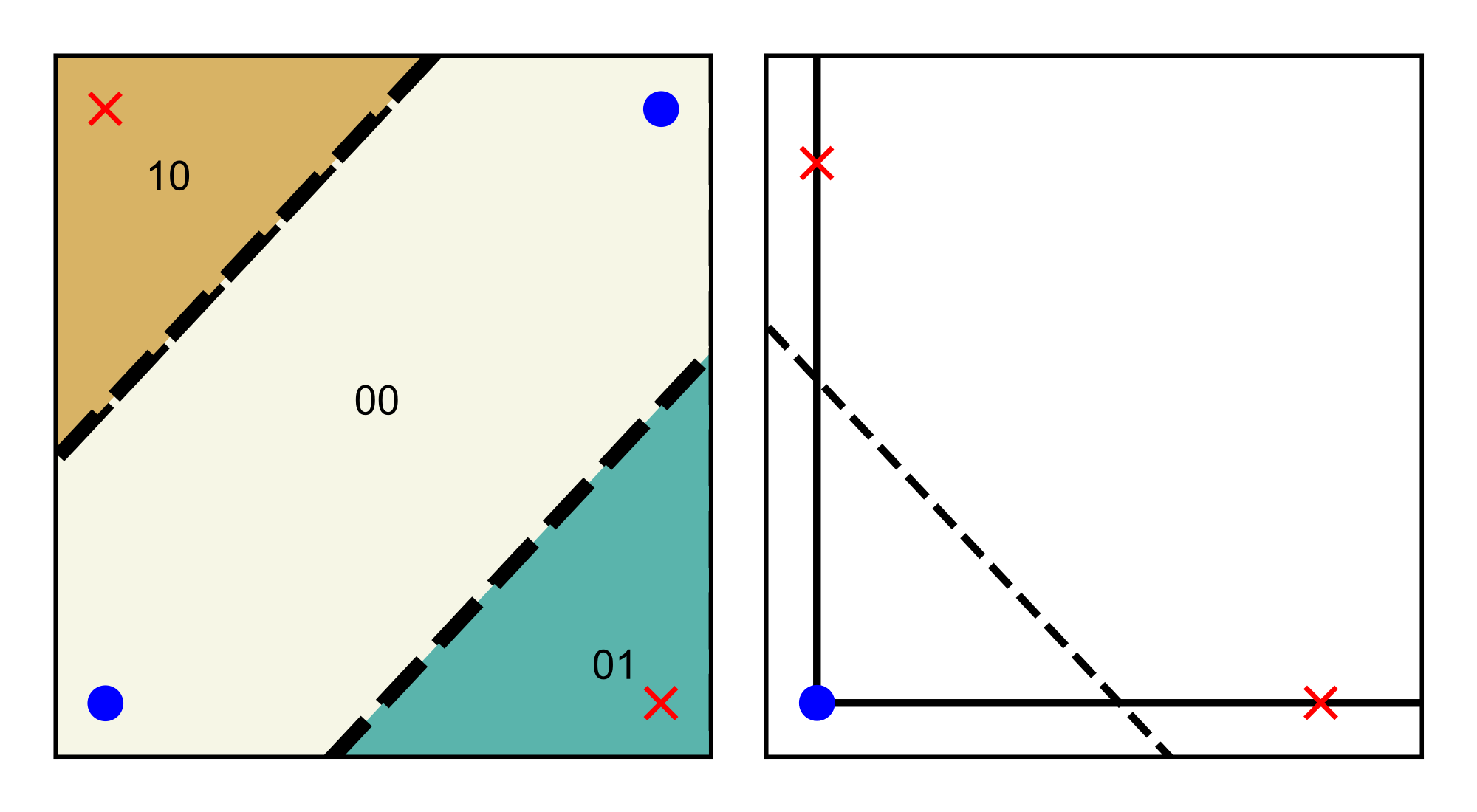}
        \caption{(left) the polyhedral decomposition at the first layer in a pure rank solution of the XOR problem. Numbers indicate the codeword of each region which corresponds to which neurons are active. (right) The linearly-separable representation of the data at the first layer. The interrupted line is one way to separate the data.}
        \label{fig:XOR rank}
    \end{figure}
    Notice that the weight matrix is of rank 1 and the network map $\Phi$ is actually of rank 0 on $(0,0)$ and $(1,1)$. Therefore the network splits $\mathbb{R}^2$ into three regions, all of which are low-rank (we take the convention that the dimension of n points is n, since contracting a point to a point does not impact the homology). We can write the rank decomposition $\mathcal{R}_\Phi=\{\{(1,0)\}, \{(0,1)\}, \{(0,0),(1,1)\}\}$ which implies that $H_0(\Phi(XOR)) = H_0(XOR/\mathcal{R}_\Phi)= 3\mathbb{Z}$ as can be found through a relative homology argument.

    \item The overlap solution requires more neurons and as we show below - more than one layer. Start with a two layer network with four neurons in layer one and two neurons in layer two. Below we specify the weight matrices and biases of the two networks.

    \begin{equation*}
        \begin{split}
            W_1=\begin{bmatrix}
                2 & 1\\
                1 & 2\\
                -2 & -1\\
                -1 & -2
            \end{bmatrix}
            , b_1=[-0.5,-0.5,2.5,2.5]^T,\\
            W_2 = \begin{bmatrix}
                1 & -1 & 0 & 0\\
                0 & 0 & 1 & -1
            \end{bmatrix}, \space b_2=[0, 0]^T.
        \end{split}
    \end{equation*}
    
    \begin{figure}[ht]
        \centering
        \includegraphics[width=0.7\linewidth]{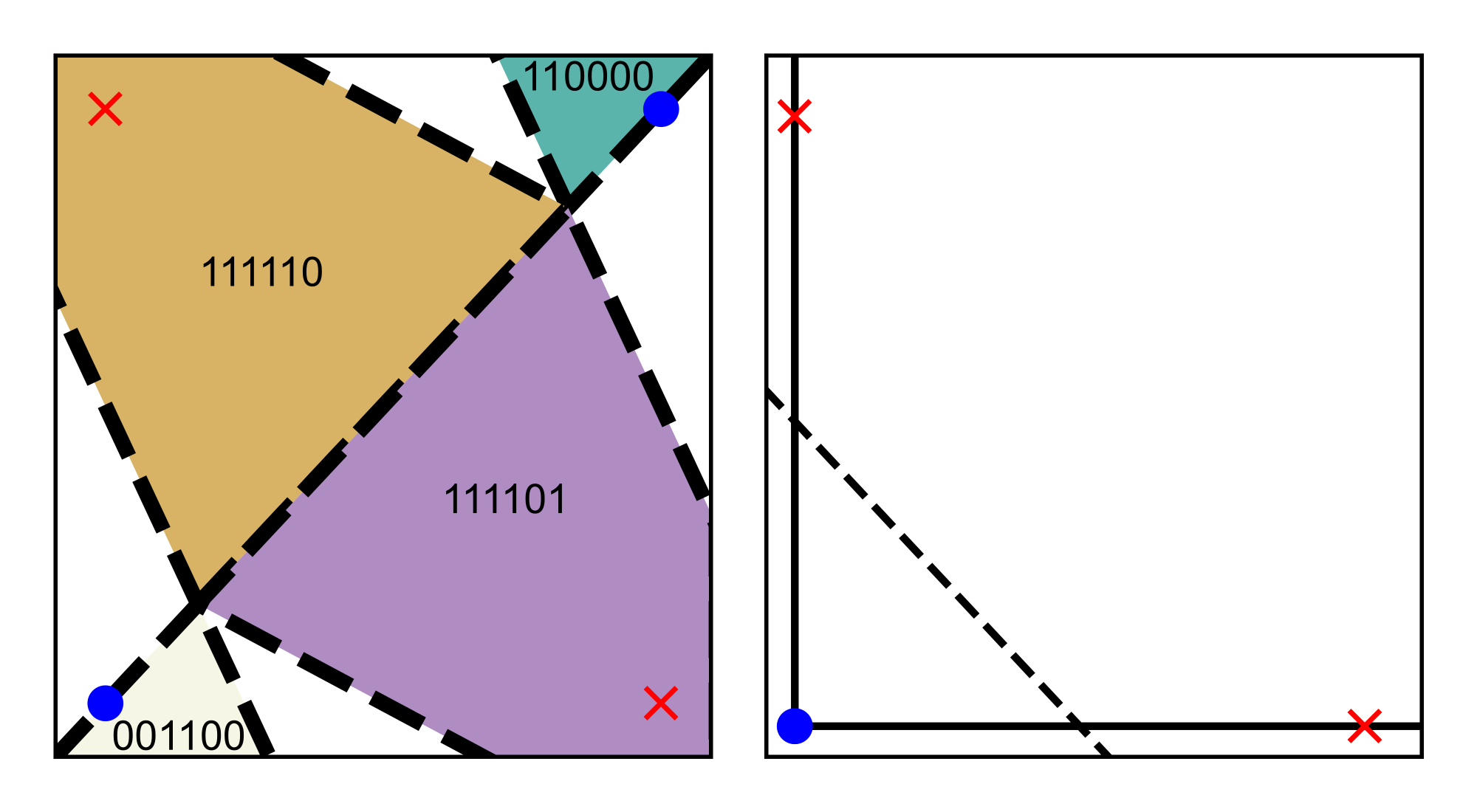}
        \caption{Same as Figure \ref{fig:XOR rank} except it shows the polyhedral decomposition at the second layer. Since the blue dots are in different regions they become part of the overlap decomposition.}
        \label{fig:XOR overlap}
    \end{figure}
    
    Applying these maps along with the ReLU's to the four points we get $\Phi(0,1) = (0,1)$, $\Phi(1,0)=(1,0)$ and $\Phi(0,0)=\Phi(1,1)=(0,0)$. Again this is a linearly separable representation of the data that can be separated by the family of hyperplanes $H_c=\{x+y=c|0<c<1\}$. However, the overlap decomposition is now non-trivial $\mathcal{O}_\Phi = \{\{(0,0),(1,1)\}\}$ since every point is within its own region $G_J$ (see Figure \ref{fig:XOR overlap} for an illustration), yet the blue points end up being glued together. The same homological argument applies.

    \item Finally the reason that both decompositions cannot be trivial is a consequence of the fact that we require that $f(0,0)=f(1,1)$ and Theorem \ref{Theorem: two sources} implies that this is the case only when the rank or the overlap decompositions are non-trivial.
\end{enumerate}
\end{proof}

\begin{proposition}
\label{Overlap two layers}
    A two layer network with architecture (1,$N$,1) with $N+\lceil N/2\rceil = N+M$-linear regions can have up to $\frac{N(N-1)}{2} + \frac{M(M-2)}{2}$-overlap regions.
\end{proposition}

\begin{proof}
    With $\sigma=$ReLU, take the function,
    \begin{equation*}
        f(x) = \sigma(x) + \sum\limits_{n=1}\limits^{N}(-1)^n\sigma(2x-2n).
    \end{equation*}
    \begin{figure}[ht!]
        \centering
        \includegraphics[width=0.5\linewidth]{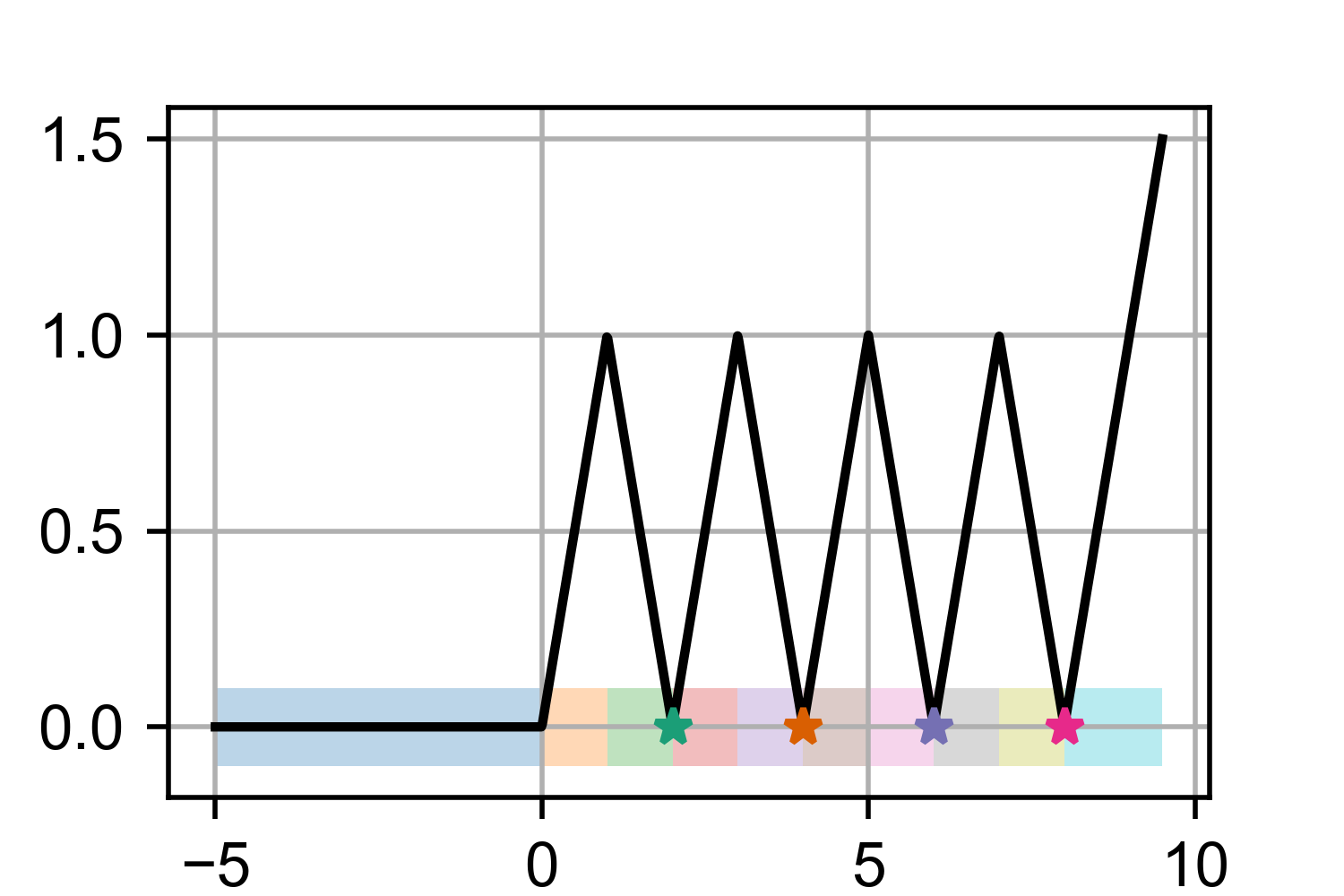}
        \caption{The specified function with $N=9$, the colored intervals highlight the polyhedral decomposition of this network.}
        \label{fig:relu fun}
    \end{figure}
    This is a neural network with weights in the first layer $W_1=[1,2,2,2,...]$, biases in the first layer $b_1=[0,2,4,6,...]$, weights in the second layer $W_2=[1,-1,1,-1,...]$ and no biases in the second layer. It is directly clear from looking at the graph of this function in Figure \ref{fig:relu fun} that all pairs of the $N$ positive regions overlap, in addition the $\lceil N/2\rceil$ regions at 0 also overlap. Thus after ignoring symmetric pairs we get the desired upper bound. 
    
\end{proof}

\subsection{Overlap decompositions in other architectures}

{
Despite introducing the overlap decomposition in the context of plain feedforward neural networks, the approach naturally works in any architecture that can be expressed as such a network. To give some examples, the convolution operation in Convolutional Neural Networks can be expressed as a Toeplitz matrix and thereby reduced to a feedforward neural network. Similarly, Recurrent Neural Networks can be unrolled into a feedforward network whose parameters are shared across layers. Below we show that our approach can also handle skip connections in Residual Neural Networks.}

{
\begin{proposition}
    The overlap decomposition $\mathcal{O}_\Phi$ is well defined for a residual neural network $\Phi^L:\mathbb{R}^{n_0} \to \mathbb{R}^{n_L}$ with $\Phi^{l}(x) = x_{l} =  F^l(x_{l-1})+M_lx_{l-1}$, where $M_l:\mathbb{R}^{n_{l-1}}\to \mathbb{R}^{n_{l}}$ is a projection connection (often chosen to be the identity) and $F^l:\mathbb{R}^{n_{l-1}}\to\mathbb{R}^{n_{l}}$ is a nonlinear transformation of $F(x_l) = \text{ReLU}(W_lx_l+b_l)$. 
\end{proposition} 
}

\begin{proof}
    {
    We shall define $\Phi^0(x)=x_0=x$ to be the input and we shall ignore the biases $b_l$ as they do not impact the presence of overlaps. If we consider the first layer for a point $x \in G^1_J$ we can use equation \ref{eq: Phi rank} to substitute the ReLU for an affine function and get,
    \begin{equation*}
        \Phi^1|_{G^1_J}(x) = Q_{J_1}W_1x+M_1x.
    \end{equation*}
    Let us write $\hat{W}_{J_k} = Q_{J_k}W_k$ and $\Phi^1|_{G^1_J}(x) = x_1$, then we get a more compact expression $x_1 = (\hat{W}_{J_1}+M_1)x$. We can recursively apply equation \ref{eq: Phi rank} in this way and write the equation for any layer as,}
    {
    \begin{align*}
        x_l &= \hat{W}_{J_l}(\hat{W}_{J_{l-1}}+M_{l-1})x_{l-1}+M_l(\hat{W}_{J_{l-1}}+M_{l-1})x_{l-1} \\
        &= (\hat{W}_{J_l}+M_l)(\hat{W}_{J_{l-1}}+M_{l-1})x_{l-1}\\
        &= \prod_{k=1}^l (\hat{W}_{J_k}+M_k)x.
    \end{align*}
    }

    {
    We have therefore showed that similarly to the feedforward case, for any $x\in G^l_J$, we can write the nonlinear map of the network as an affine map in terms of a (left) matrix product. The overlap decomposition is then defined in the same way by replacing $\Phi^l$ from equation \ref{eq: Phi rank} with $\Phi^l|_{G_J^l}(\cdot) = \prod_{k=1}^l (\hat{W}_{J_k}+M_k)(\cdot)$.}
\end{proof}

\newpage

\section{Simulation details}
\label{Sim details}

Here we present all parameters and details for all simulations presented in the results. Where no further explanation is needed, we shall present the parameters in tables.

\subsection{Pseudocode for the overlap detection algorithm}

\begin{algorithm}
    \caption{Linear programming for overlap detection}
    \label{alg:linprog}
\begin{algorithmic}
    \STATE {\bfseries Input:} Points $X$ from $\mathcal{G}_l$
        \FOR{$P_i \neq P_j \in \mathcal{G}_l$}
            \STATE $\{A_i,b_i\},  \{A_j, b_j\}$ \text{ H-representations of } $P_i$ \text{ and } $P_j$
            \FOR{$\forall y \in P_i \text{ and } \forall z \in P_j$}
                \STATE $C \gets \{A_jx \leq b_j\} \text{ and } \{\Phi^l_{J_j}(x)= \Phi^l_{J_i}(x')\}$ Add the constraints
                \STATE LP $\gets \min\limits_{x}(0^Tx)|C$ Solve the linear program
                    \IF {LP is not empty}
                        \STATE $B_y \gets y$ \text{ or } $B_z \gets z$
                    \ENDIF
                \ENDFOR
            \STATE $\mathcal{O} \gets B_y\times B_z$ Add overlapping pairs
            \ENDFOR
        \STATE $\mathcal{O} \gets UF(\mathcal{O})$ Union-Find to get overlapping sets
    \STATE \textit{Return} $\mathcal{O}$
\end{algorithmic}
\end{algorithm}

\subsection{Non-linear curves simulations}
We show two example non-linear curves in Figure \ref{fig:curves} and ten more in a supplementary Figure \ref{fig:supp figure many knots}. The parameters described below apply for each individual example that we present. Since the point of these simulations is to compare persistent homology to our quotient homology based approach, we only use a training dataset. All parameters are specified in the table below.

\begin{table}[ht]
\caption{Parameters for non-linear curve simulations}
\label{curves params}
\vskip 0.15in
\begin{center}
\begin{small}
\begin{sc}
\scalebox{0.8}{
\begin{tabular}{lccccccc}
\toprule
\# Samples & Criterion & Learning rate & Epochs & Stopping Criterion & Width & Depth & Sensitivity \\
\midrule
500 & MSE & 1e-4 & 1000 & MSE $<$ 0.00002 & 50 & 3 & 1 \\
\bottomrule
\end{tabular}
}
\end{sc}
\end{small}
\end{center}
\vskip -0.1in
\end{table}

\subsection{Reproduction of the results of Naitzat et al.}
As mentioned before, we reproduced the results from \citet{naitzat2020topology} with fewer samples due to the high computational demand of computing the overlap decomposition. This meant that we also had to reproduce the pre-processing steps in their pipeline. In order to speed up the computation of persistent homology and attain a more robust measure of distance, Naitzat et al. fixed a $k$-nearest neighbors graph at a fixed persistence parameter $\epsilon$ for all datasets. Since the homology groups of the datasets were known a priori, they were able to choose these values on the dataset and carry them forward in the computation of homology groups in deeper layers. 

\begin{figure}[ht]
    \centering
    \includegraphics[width=1\linewidth]{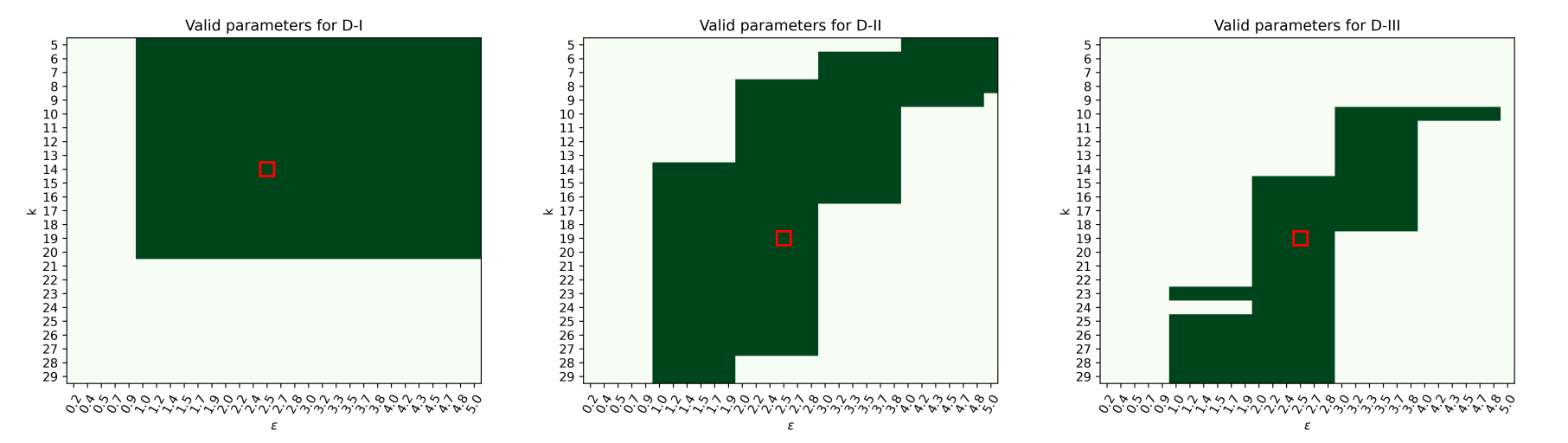}
    \caption{Parameter values that generate the same Betti numbers as the ground truth data. The chosen values were $\epsilon=2.5$ for all datasets, k=14 for D-I and k=19 for D-II and D-III. These values are highlighted in red.}
    \label{fig:enter-label}
\end{figure}

We follow in their steps, but due to the different number of samples we ended up using slightly different parameter values. Below we show the parameter values for which we achieved the ground truth topology (in green), the one we ended up choosing is highlighted in red.

\begin{table}[ht]
\caption{Parameters for Naitzat et al. reproduction results}
\label{naitzat params}
\vskip 0.15in
\begin{center}
\begin{small}
\begin{sc}
\scalebox{0.8}{
\begin{tabular}{lcccccccc}
\toprule
Dataset & Train/Test & Criterion & Learning rate & Epochs & Stopping Criterion & \# models & Width & Depth \\
\midrule
D-I & 7800/2200  & CSE & 2e-5 & 5000 & Accuracy $>$ 0.999 & 30 & 15 & 9  \\
D-II & 7500/2500 & CSE & 2e-5 & 5000 & Accuracy $>$ 0.999  & 30 & 15 & 9  \\
D-III & 8000/4000 & CSE & 2e-5 & 5000 & Accuracy $>$ 0.999 & 30 & 15 & 9  \\
\bottomrule
\end{tabular}
}
\end{sc}
\end{small}
\end{center}
\vskip -0.1in
\end{table}

In addition we wanted to avoid issues in homology estimation due to missclassified points, so we excluded all such points from the test set before performing a homology calculation. Given that our networks had almost perfect accuracy, this procedure removed very few points ($7.5 \pm 12.9$). For the calculation of persistent homology we used the \textbf{Ripser} package (\cite{ctralie2018ripser, Bauer2021Ripser}).

\subsection{Counting regions in the spheres dataset}
All experiments for spheres of different dimensions $S^1, S^2 \text{ and } S^3$ share the same parameters except for the dimension. Also the two classes were balanced, meaning that they contained the same number of samples.

\begin{table}[ht]
\caption{Parameters for spheres datasets}
\label{spheres params}
\vskip 0.15in
\begin{center}
\begin{small}
\begin{sc}
\scalebox{0.8}{\begin{tabular}{lccccccc}
\toprule
\# Samples & Criterion & Learning rate & Epochs & \# models & Width & Depth & Sensitivity \\
\midrule
2000 & CSE & 2e-5 & 1000 & 10 & 25 & 4 & 1 \\
\bottomrule
\end{tabular}
}
\end{sc}
\end{small}
\end{center}
\vskip -0.1in
\end{table}

\subsection{Evaluating the false negative rate as a function of the delta parameter}

{The evaluation for the false negative rate of different values of $\delta$ is performed on the spheres dataset using the models described in Table \ref{spheres params}. We form a distribution of the distance in each network's output and then vary $\delta$ between the [1,5,25,50,75] percentiles of this distribution. We take the 100th percentile to be the ground truth. To compare overlap decompositions at different values of $\delta$, we compute the false negative rate by the following measure,}

\begin{equation*}
    d(\mathcal{O},\mathcal{O}') = 1- \frac{\sum\limits_{U\in \mathcal{O}}\max\limits_{V \in \mathcal{O}'}(|U\cap V|)}{\sum\limits_{U\in \mathcal{O}} |U|}.
\end{equation*}

\newpage

\section{Supplementary figures}

\begin{figure}[ht]
    \centering
    \includegraphics[width=1\linewidth]{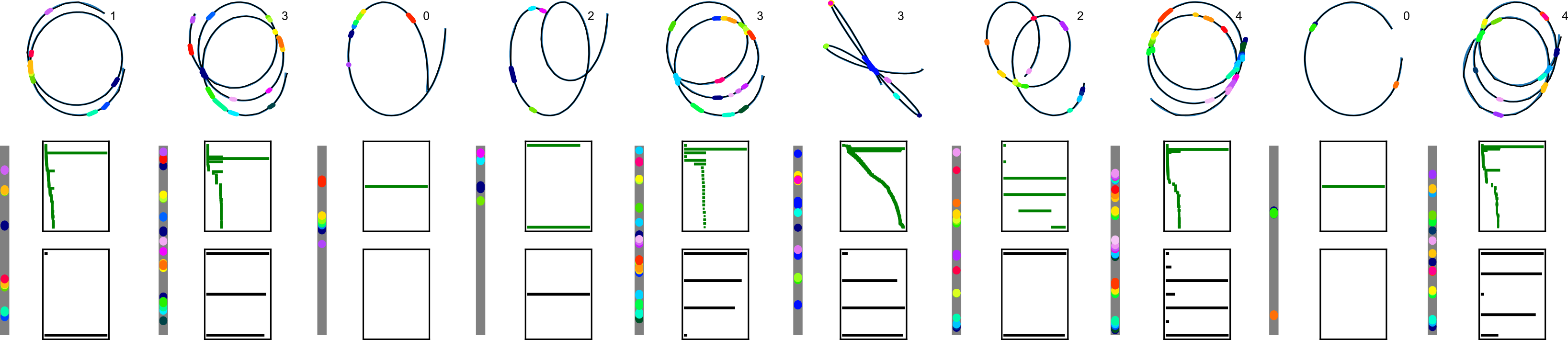}
    \caption{More examples of non-linear manifolds and their homology groups. The numbers in the top right of each manifold show the ground truth number of holes. While our quotient homology approach typically performs better, notice that in the fourth and (arguably) eighth example we see a homological type 2 error.}
    \label{fig:supp figure many knots}
\end{figure}

\begin{figure}
    \centering
    \includegraphics[width=1\linewidth]{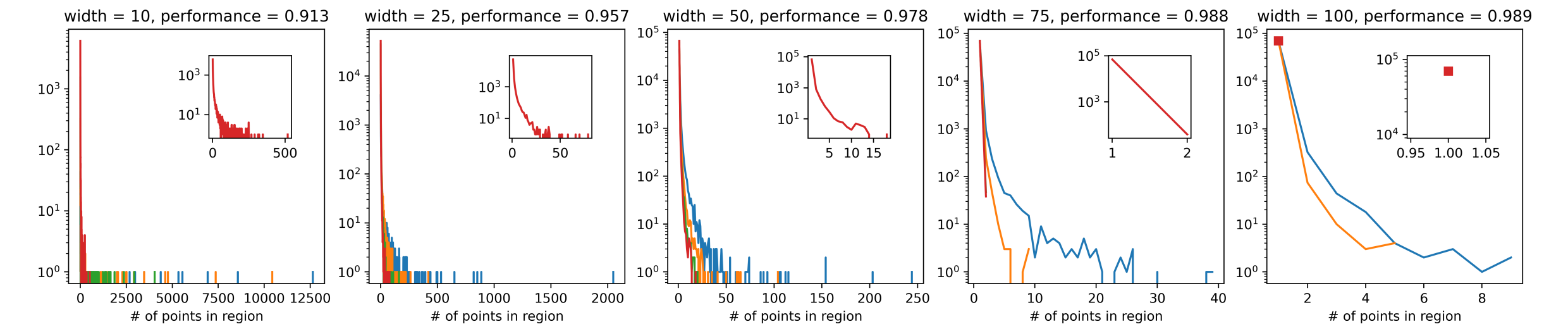}
    \caption{Histograms of the number of points that fall in each polyhedron induced by four layer networks trained and evaluated on all 60000 MNIST digits as a function of width. The colors (blue, orange, green and red) correspond to the respective four layers. One can observe that as the width of the network increases, we see an improvement in performance accompanied by fewer points belonging to each polyhedron. In layers three and four of the widest network, we see that each polyhedron is populated by only one point. This indicates that as networks become wider, the chance of only having convex intersections increases which is consistent with the fact that the volume of the average polyhedron decreases.}
    \label{fig:convexity test}
\end{figure}

\begin{figure}
    \centering
    \includegraphics[width=1\linewidth]{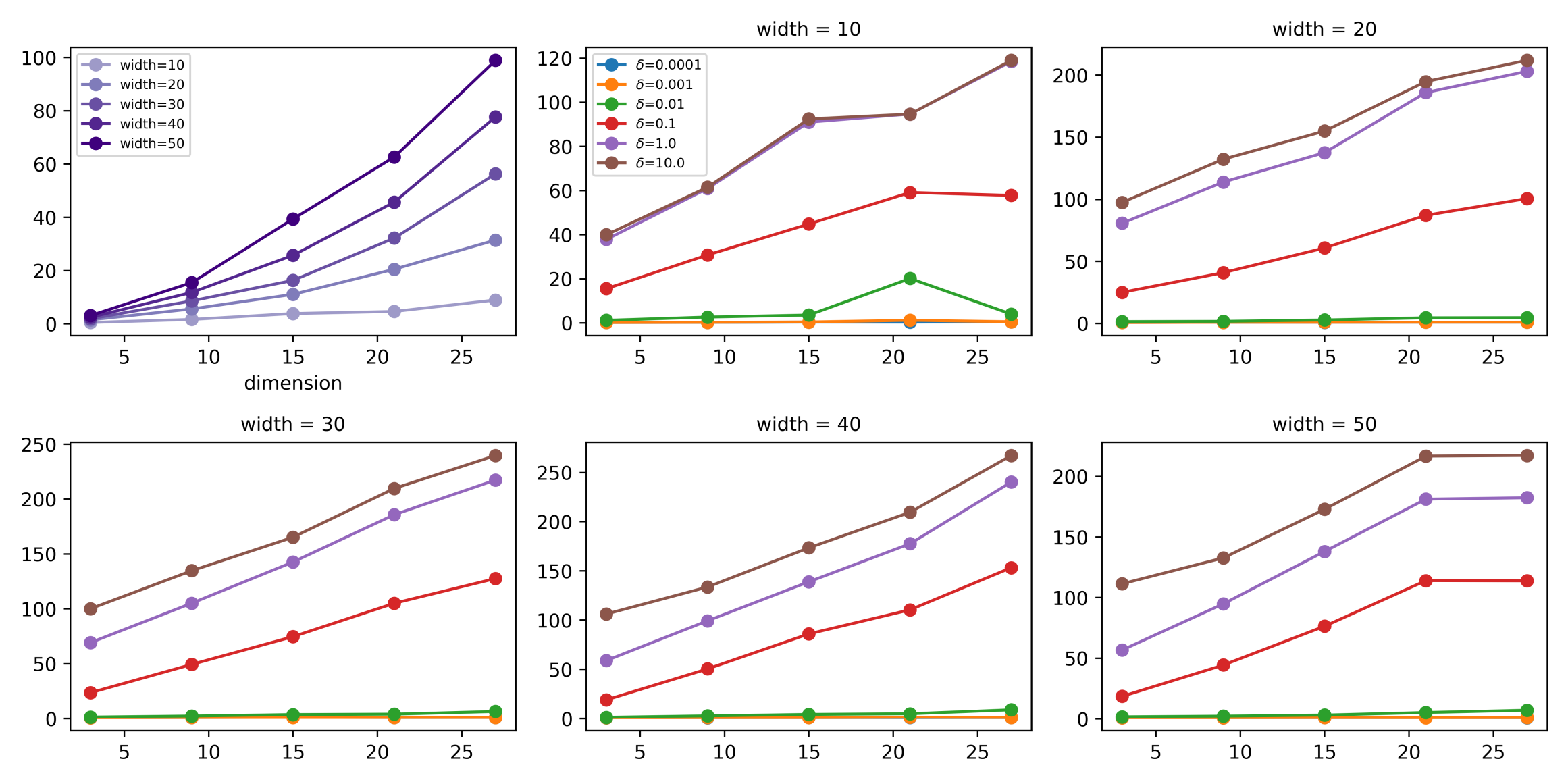}
    \caption{(Top left) Time in seconds to compute the H-representations of all polyhedra in a two layer network as a function of width and input dimension on 1000 points sampled from the datasets in section \ref{overlap trained random}. The remaining plots show the time it takes to compute the overlap decomposition as a function of width, input dimension and the sensitivity parameter. As can be seen the dependence on width is relatively minor. On the other hand, low values of the sensitivity parameter lead to very fast computing times, at the risk of missing important topological structure. All runs were performed on an Apple M3 Pro CPU with 11 cores.}
    \label{fig:time}
\end{figure}

\begin{figure}
    \centering
    \includegraphics[width=0.7\linewidth]{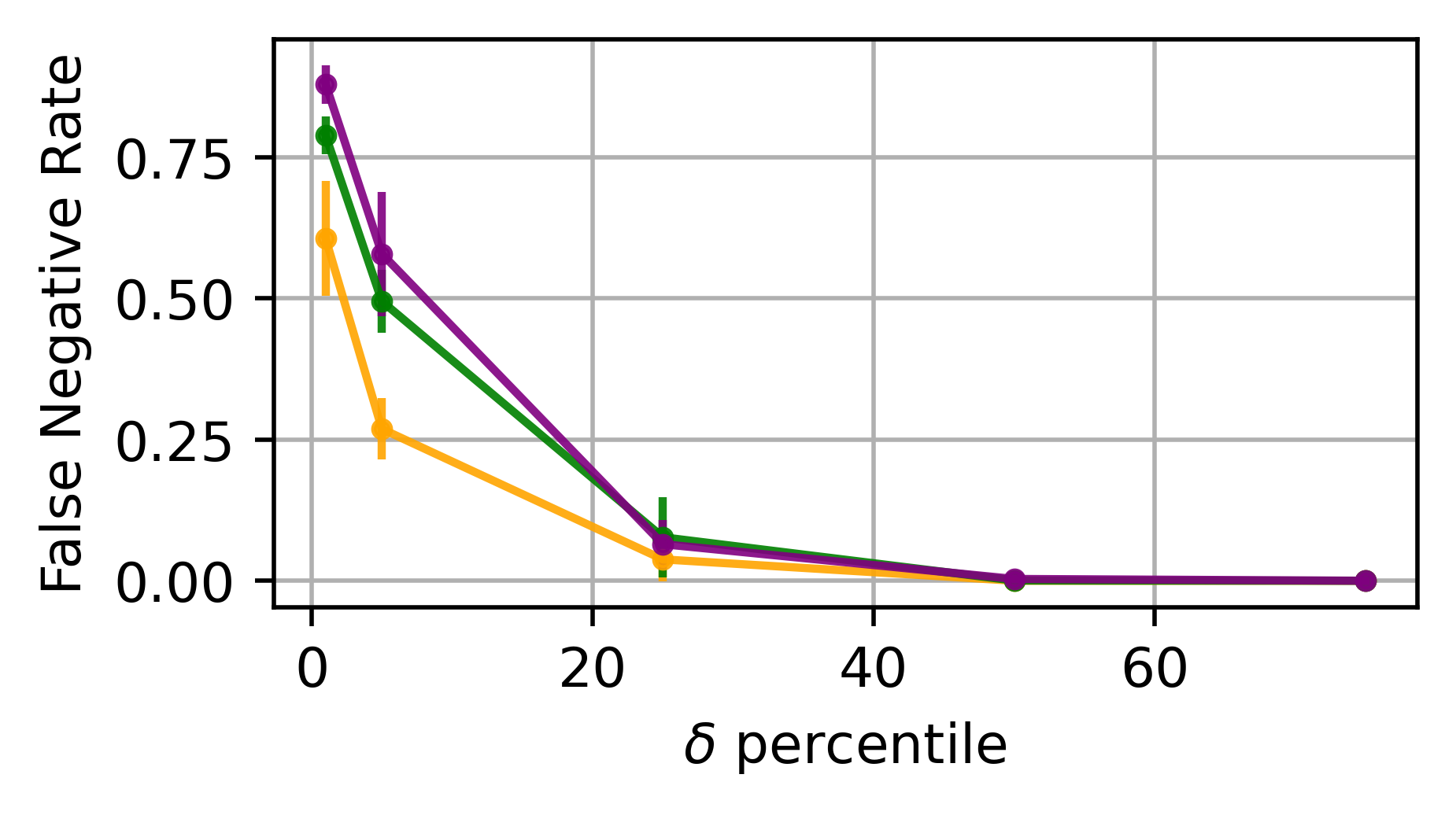}
    \caption{{False negative rate as a function of the $\delta$ parameter for ten models trained on the three datasets from section \ref{overlap trained random}. To make $\delta$ comparable across datasets, we first compute the pairwise distances between all points in the output $\Phi^L(X)$ of each model. This produces a distribution of distances for each model. We then set $\delta$ to the [1,5,25,50,75] percentile and compare the overlap decomposition at those values to the ground truth decomposition at the 100th percentile (where all distances are considered).}}
    \label{fig:false negative rate}
\end{figure}

\begin{figure}[ht]
    \centering
    \includegraphics[width=1\linewidth]{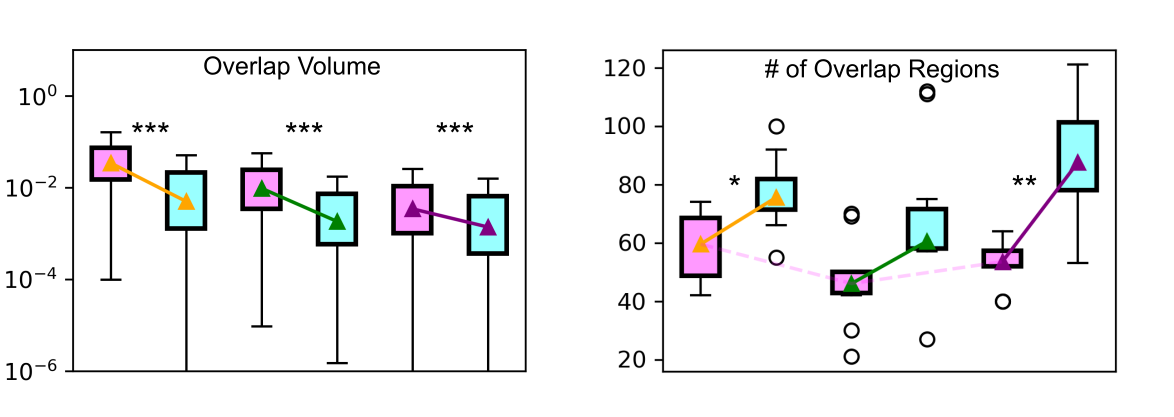}
    \caption{Same plot as in Figure \ref{fig:overlap regions} but using an orthogonal initialization (in magenta).}
    \label{fig:orthogonal}
\end{figure}

\begin{figure}
    \centering
    \includegraphics[width=0.8\linewidth]{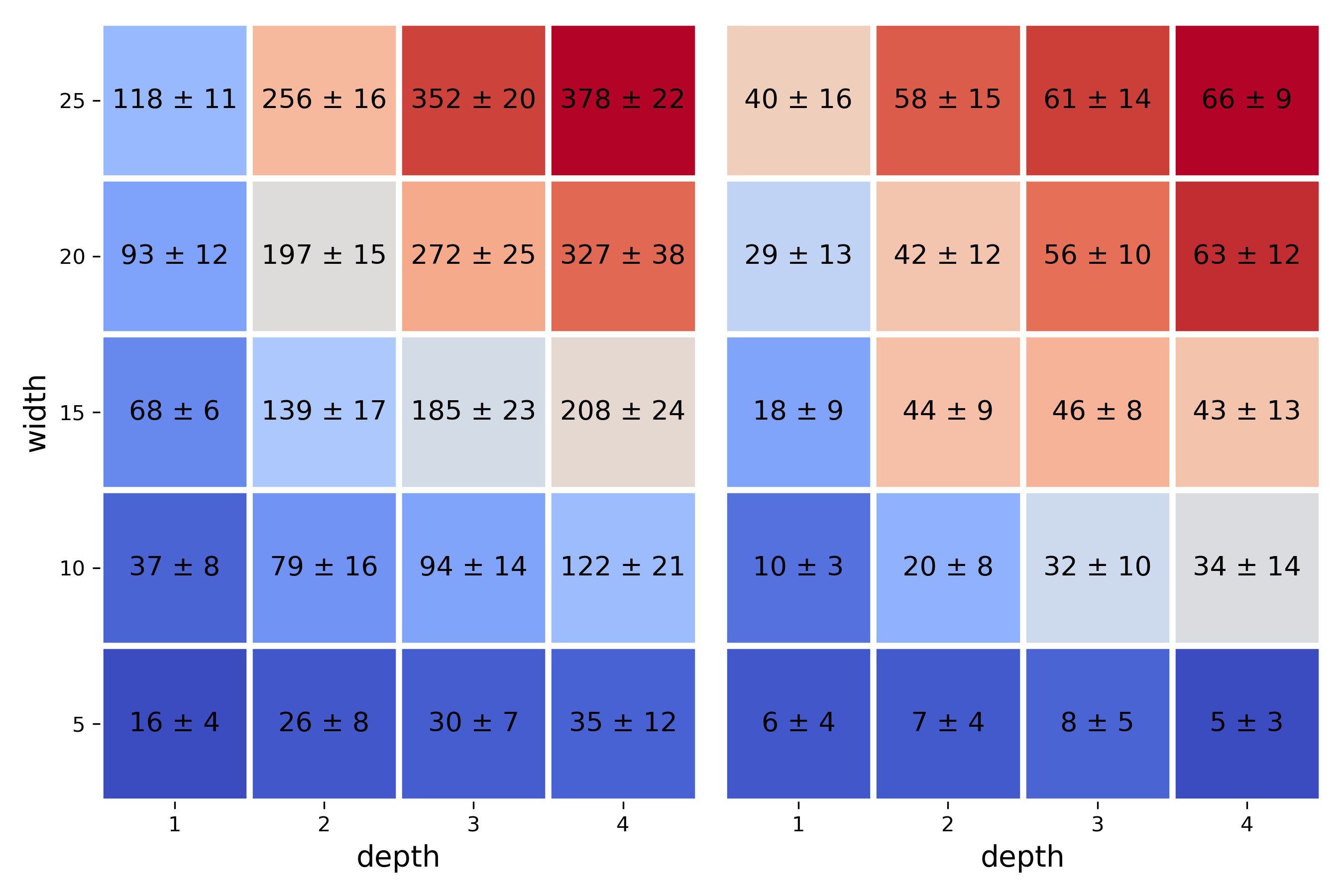}
    \caption{(left) Number of populated polyhedra across 10 randomly initialized networks on the two dimensional data from \ref{overlap trained random} for different widths and depths. Annotations show the mean ± the standard deviations across models. (right) Number of overlap regions in the same models. Compared to the number of populated polyhedra, the number of overlap regions seems to scale non-monotonically.}
    \label{fig:overlap expressivity}
\end{figure}

\clearpage
\end{document}